\newtheorem{example}{Example}
\newtheorem{proposition}{Proposition}
\newtheorem{theorem}{Theorem}
\title{Counterfactual Scenarios for Automated Planning}
\author{%
Nicola Gigante$^1$\and
Francesco Leofante$^2$\and
Andrea Micheli$^3$\\
\affiliations
$^1$Free University of Bozen-Bolzano, Bolzano, Italy\\
$^2$Department of Computing, Imperial College London, UK\\
$^3$Fondazione Bruno Kessler, Trento, Italy
\emails
nicola.gigante@unibz.it,
f.leofante@imperial.ac.uk,
amicheli@fbk.eu
}
\renewcommand{\implies}{\rightarrow}
\newcommand{\tuple}[1]{\ensuremath{\langle #1 \rangle}}
\newtheorem{definition}{Definition}
\newcommand{\pre}[1]{\ensuremath{\mbox{pre}_{#1}}}
\newcommand{\eff}[1]{\ensuremath{\mbox{eff}_{#1}}}
\newcommand{\allprobs}{\ensuremath{\mathbb{P}}\xspace}
\newcommand{\allpaths}[2][C]{\ensuremath{\Gamma_{#1}(#2)}\xspace}
\newcommand{\pathcost}[2][C]{\ensuremath{\Delta^{P}_{#1}(#2)}}
\newcommand{\minpath}[1]{\pathcost{#1}\xspace}
\newcommand{\plans}[1]{\ensuremath{\Pi_{#1}}}
\DeclareMathOperator*{\argmin}{arg\,min}
\newcommand{\explanation}[2]{\ensuremath{\Xi_{#1}(#2)}}
\newcommand{\ltlf}{LTL\ensuremath{_{f}}\xspace}
\newcommand\abs[1]{\left|#1\right|}
\newcommand\citet[1]{\citeauthor{#1}~(\citeyear{#1})}
\DeclareMathOperator{\csep}{CSEP}
\DeclareMathOperator{\enc}{enc}
\newcommand\tomorrow{\mathop{\bigcirc}}
\newcommand\wtomorrow{\mathrlap{\hspace{1pt}\sim}{\tomorrow}}
\newcommand\until{\mathrel{\mathcal{U}}}
\newcommand{\change}[1]{\textcolor{black}{#1}}
\begin{document}

\maketitle

\begin{abstract}
Counterfactual Explanations (CEs) are a powerful technique used to explain Machine Learning models by showing how the input to a model should be minimally changed for the model to produce a different output. Similar proposals have been made in the context of Automated Planning, where CEs have been characterised in terms of minimal modifications to an existing plan that would result in the satisfaction of a different goal. While such explanations may help diagnose faults and reason about the characteristics of a plan, they fail to capture higher-level properties of the problem being solved. 
To address this limitation, we propose a novel explanation paradigm that is based on counterfactual scenarios. In particular, given a planning problem $P$ and an \ltlf formula $\psi$ defining desired properties of a plan, counterfactual scenarios identify minimal modifications to $P$ such that it admits plans that comply with $\psi$. In this paper, we present two qualitative instantiations of counterfactual scenarios based on an explicit quantification over plans that must satisfy $\psi$. We then characterise the computational complexity of generating such counterfactual scenarios when different types of changes are allowed on $P$. We show that producing counterfactual scenarios is often only as expensive as computing a plan for $P$, thus demonstrating the practical viability of our proposal and ultimately providing a framework to construct practical algorithms in this area. 
\end{abstract}


\section{Introduction}

The widespread adoption of AI solutions for consequential decision-making tasks has fuelled considerable interest in Explainable AI (XAI). This is particularly true in the area of Machine Learning (ML), where black-box models are often deployed in applications such as credit risk analysis~\cite{HELOC} or bail approval~\cite{COMPASS}. In these settings, most approaches focus on explaining single-shot decisions (\ie predictions) produced by ML models but are often unable to explain more sophisticated decision-making tasks involving multiple reasoning steps.

The planning community has long recognised the importance of providing explanations for sequential decision-making tasks~\cite{ChakrabortiSK20}. As a result, a host of explainability approaches have been proposed. \change{Examples include model reconciliation~\cite{ChakrabortiSZK17,SreedharanCK18}, which focuses on resolving potential discrepancies between an AI agent's internal model and a human's mental model, and contrastive explanations~\cite{KrarupKMLC021,KrarupCL024,HoffmannM19}, which highlight differences between a plan generated by an AI and a user-suggested alternative, showing why the former was preferred.}

Much less attention has been devoted to counterfactual explanations, a popular explanation framework that is favoured in XAI due to its intelligibility and alignment with human reasoning~\cite{Byrne19}. Counterfactuals are actively studied in ML, where they are typically defined in terms of minimally altered inputs for which the ML model gives a different, more desirable output from that of the original input~\cite{KarimiBSV23}. Echoing this idea, recent work in planning proposed to characterise counterfactuals in terms of minimal modifications to an existing plan that would result in the satisfaction of a different goal~\cite{belle2023counterfactual}. 

Counterfactuals defined in this way are well suited to reason about \emph{local} properties of a plan, \ie diagnose faults in it and potentially identify avenues for repair showing how a plan would need to change for a desired outcome to be attained. However, in many practical applications, users are often interested in understanding how characteristics of a planning problem may affect the quality of plans that can be produced. In such cases, local counterfactual explanations are of little use as they fail to capture properties of the planning problem being solved, leaving many unanswered questions about the fundamental relationships between problem structure and plan properties.


\smallskip\textbf{Contributions.} In this paper, we fill this gap and propose a novel
explanation paradigm based on the concept of \emph{counterfactual scenarios} for
automated planning problems. Differently from existing proposals, our
explanations rely on counterfactual modifications of the planning problem
itself, thus pointing users of planning software to what would need to be
changed in the original problem formulation for a given course of action to be
observed. More formally, given a planning problem $P$ and an
\ltlf~\cite{DeGiacomoV13} formula $\psi$ defining desired properties of a
plan, counterfactual scenarios identify minimal modifications to $P$ such that
it admits plans that comply with $\psi$. We investigate the computational
complexity of generating such explanations under counterfactual modifications
that can alter the initial state of $P$, the structure of its actions or its
goals. We focus our analysis on two qualitative variants of counterfactuals
scenarios: \emph{existential counterfactual scenarios} ($\exists$), which
identify modifications to $P$ such that it admits \emph{at least one plan}
satisfying $\psi$, and \emph{universal counterfactual scenarios} ($\forall$),
which modify $P$ in such a way that \emph{all its valid plans} satisfy
$\psi$. We demonstrate the feasibility of our proposal by showing
that generating $\exists$ and $\forall$ counterfactual scenarios often has a
computational complexity comparable to finding a plan for the original problem,
thus providing a foundation for developing practical algorithms in this domain.



\smallskip\textbf{Related work.} The challenge of explaining AI behaviours in sequential decision-making settings has gained considerable attention (see, \eg~\cite{baier_et_al:DagRep.14.9.67} for a recent account). As a result, a host of approaches have been proposed both in model-based and model-free settings (see, \eg~\citet{ChakrabortiSK20} and \citet{MilaniTVF24} for a general overview). \change{In the following, we focus on counterfactual explanations for sequential decision-making tasks, but refer the reader to \cref{sec:discussion} for a broader discussion of related work in the planning literature.}

While research on counterfactuals for sequential decision-making is still in its
infancy, several proposals have already been put forward. For instance,
counterfactuals for automated planning problems have been defined
by~\citet{belle2023counterfactual} as minimal modifications to an existing plan
that would result in the satisfaction of a different goal. This definition
echoes existing characterisations of counterfactuals in
ML~\cite{wachter2017counterfactual} and is closely related to the literature on
plan repair~\cite{FoxGLS06}. However, differently from the aforementioned
definition, our counterfactuals identify changes
\emph{in the planning problem itself} and can thus be used to point to how
changes in the structure of a problem may affect plan properties. 

Counterfactuals have also been investigated in the context of sequential decision-making problems modelled by Markov Decision Processes (MDPs). For instance,~\citet{Kobialkaetal} studied the problem of generating counterfactual strategies for MDPs by computing minimal modifications to an original strategy that would lead to reaching desirable states high probability. Similar efforts have been made in the Reinforcement
Learning arena, as summarised by~\citet{GajcinD24}. While the works above propose notions that vary
based on the specific changes that can be effected on a policy, they still frame
counterfactuals as minimally altered policies, differently from our
counterfactual scenarios. Finally, causality-based counterfactuals for MDPs have been investigated~\cite{TsirtsisDR21,kazemi2024counterfactual}. In
this line of work, counterfactual explanations are defined in terms of paths
that diverge by at most $k$ actions from a given initial path in the MDP, again
marking a key difference from our proposal.

\smallskip\textbf{Structure of the paper.} In the next section, we motivate and explain
the need for counterfactual explanations whose scope extends beyond existing
proposals, while \cref{sec:preliminaries} provides some background concepts on planning and \ltlf.
Then, \cref{sec:formalization} presents our novel notion of counterfactual
scenarios, and in \cref{sec:classes} we present three concrete classes of
counterfactuals that we are interested in analysing. \Cref{sec:results} proves
computational complexity results for such classes. Finally, we discuss the implications of our results and draw our
conclusions in \cref{sec:discussion} \change{and \cref{sec:future}}.









\section{A motivating example}
\label{sec:message}

To see what makes counterfactual scenarios interesting and useful, let us
consider an example based on the food delivery domain originally formulated
by~\citet{KrarupCL024}.

\begin{figure}
    \centering
    \includegraphics[width=\linewidth,trim={12cm 7cm 15cm 6cm},clip]{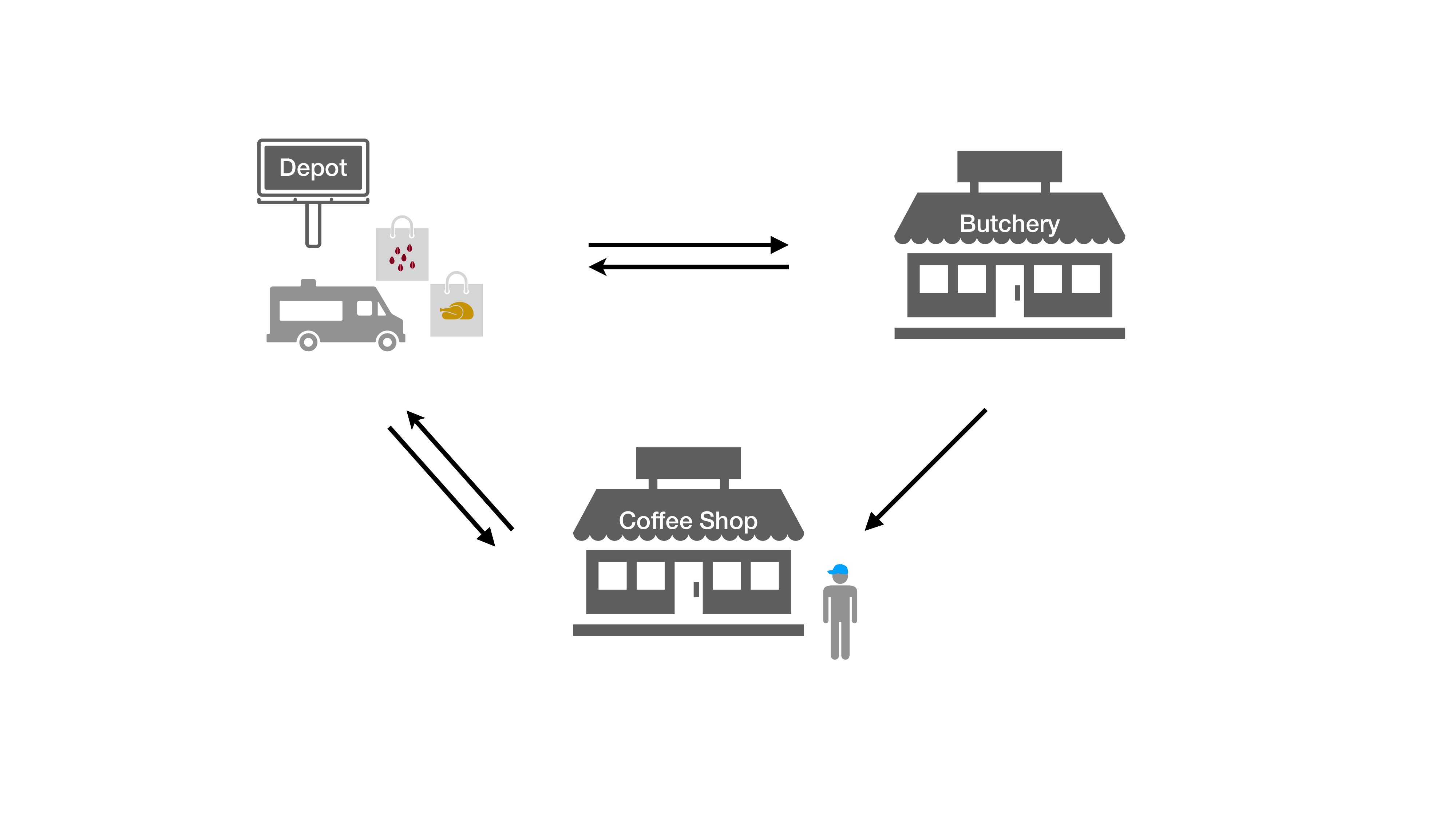}
    \caption{The food delivery domain.}
    \label{fig:overview}
\end{figure}

\begin{example}
\label{ex:running}
    The domain features one driver, one truck and three locations: a depot, a butchery, and a coffee shop. All locations are connected by roads with no pedestrian access, modelled by a predicate \texttt{link(?from,?to)} (links are directed). Both the goods and the truck are initially located at the depot (\texttt{at(coffee,depot)}, \texttt{at(meat,depot)}, \texttt{at(truck,depot)}), while the 
driver is located at the coffee shop (\texttt{at(driver,coffee shop)}). The goal is for the driver to deliver meat to the butchery and coffee beans to the coffee shop ($\texttt{at(meat,butchery) } \wedge \texttt{ at(coffee,coffee shop)}$). To achieve this goal, the following actions can be performed:
\begin{itemize}
    \item \texttt{load(?item,?truck,?loc)}: loads an item onto the truck, provided both are at the same location. When an item is loaded, it is removed from its location and is marked as stored inside the truck ($\neg\texttt{at(?item,?truck)} \wedge \texttt{in(?item,?truck)}$);
    \item \texttt{unload(?item,?truck,?loc)}: removes an item from the truck, making it available at the location where it was unloaded;
    \item \texttt{drive(?truck,?from,?to,?driver)}: allows to drive the truck between two specified locations, provided that they are connected by a road, \ie \texttt{link(?from, ?to}) is true. The driver and the truck must start from the same location for this action to be executable.
\end{itemize}
A visual representation of this problem is shown in Figure~\ref{fig:overview}.
\end{example}

Existing approaches to define counterfactual explanations for sequential
problems assume the availability of an initial action sequence from which the
counterfactual explanation can be obtained. However, there may be cases where
the planning problem does not admit a solution and the problem of generating
counterfactuals for such instances becomes ill-defined. For instance, the
planning problem described in \cref{ex:running} does not admit a
solution, as the driver has no way to reach the truck and carry out deliveries.
As a consequence, no initial action sequence can be determined to seed the
search for a counterfactual explanation. In situations where a solution is
unattainable, our \emph{existential counterfactual scenarios} can help identify
modifications to the planning problem that would make the problem solvable, as
demonstrated in \cref{ex:existential}

\begin{example}
\label{ex:existential}
    The planning problem described in \cref{ex:running} does not admit a solution, as the driver is initially located away from the truck with no way to reach it. In such cases, an existential counterfactual scenario highlights how the initial problem should be changed for it to admit a solution. A possible existential counterfactual scenario can therefore be obtained by changing the initial location of the driver from \texttt{at(driver, coffee-shop)} to \texttt{at(driver, depot)}. As a result, the new planning problem obtained by implementing this change admits (at least) one plan: 
\begin{itemize}
\small
    \item \texttt{load(meat,truck,depot)}
    \item \texttt{load(coffee,truck,depot)}
    \item \texttt{drive(truck,depot,butchery,driver)}
    \item \texttt{unload(meat,truck,butchery)}
    \item \texttt{drive(truck,butchery,coffee-shop,driver)}
    \item \texttt{unload(coffee,truck,coffee shop)}
\end{itemize}
\end{example}

The counterfactual scenario in \cref{ex:existential} helps users of planning software deal with unsolvable planning problems, potentially identifying changes in the problem formulation that determined unsolvability. These kind of explanations can be used to complement existing efforts, \eg~\cite{ErikssonRH18,ErikssonH20} and improve the user experience, debugging process, or overall effectiveness of planning tools. However, existential counterfactual scenarios also prove useful when planning problems are solvable to begin with, as discussed in \cref{ex:existential_pre}.

\begin{example}
    \label{ex:existential_pre}
    Consider the planning problem given by the existential counterfactual scenario in \cref{ex:existential}. Suppose now that the user requires that in some plans, the truck visits the coffee shop before making a delivery to the butchery. This can be captured by the following logical specification:
\begin{gather*}
    \begin{aligned}
        \psi \coloneqq \mathop{\Box} & \bigl(
        \mathtt{at(truck, coffee{-}shop)} \implies \tomorrow \mathtt{at(truck,butchery)}\bigr)\\ {}\land{}
                & \bigl(\neg \mathtt{at(truck,butchery)} \until \mathtt{at(truck,coffee{-}shop)}\bigr)
    \end{aligned}
\end{gather*}

    Satisfying this requirement is impossible in the original domain, as driving from the coffee shop to the butchery would require a (directed) connection between the two locations. This fact can be highlighted by means of an existential counterfactual scenario, showing how the preconditions of the action \texttt{drive} should be minimally changed for some valid plans to satisfy $\psi$. A possible solution to the problem corresponds to a modified planning problem where the precondition of the action is weakened to:
    \begin{equation*}
        \left\{
            \begin{aligned}
                & \mathtt{at(driver,coffee{-}shop)} \land {}\\
                & \mathtt{at(truck,coffee{-}shop) } \land {}\\
                & \mathtt{link(coffee{-}shop,butchery)}
            \end{aligned}
        \right\}
    \end{equation*}

    With this change, the truck is allowed to visit the butchery after the coffee shop in some of the plans admitted by the existential counterfactual scenario.
   
\end{example}

\cref{ex:existential_pre} showed how existential counterfactual scenarios can result in changes that bring about \emph{at least one} plan satisfying user requirements. However, depending on the application, users may be interested in changes that constrain the planning problem to have \emph{all} its valid solutions comply with a specification. Our \emph{universal counterfactual scenarios} can achieve this, as demonstrated in \cref{ex:universal}.

\begin{example}
    \label{ex:universal}
    Consider the existential counterfactual scenario from \cref{ex:existential}. Suppose now that the user requires that in all plans, the truck should always go back to the depot (eventually) after completing all deliveries. This can be expressed by the following logical specification:
    \begin{equation*}
        \psi := \Box \left[
            \begin{aligned}
                & \left(\begin{aligned}
                    & \mathtt{at(meat,butchery)} \land {} \\
                    & \mathtt{at(coffee,coffee{-}shop)} 
                \end{aligned}\right)\implies\\
                \implies & \,\,\Diamond \mathtt{at(truck,depot)}    
            \end{aligned}
        \right]
    \end{equation*}
    
    Universal counterfactual scenarios can help identifying changes in the planning problem that would satisfy the requirement above. For instance, strengthening the goal to:
    \begin{equation*}
        \left\{
            \begin{aligned}
                & \mathtt{at(meat,butchery)} \land {}\\
                & \mathtt{at(coffee,coffee{-}shop) } \land {} \\
                & \mathtt{at(truck,depot)}
            \end{aligned}
        \right\}
    \end{equation*}
    would result in the the truck eventually visiting the depot in all plans
    admitted by the modified planning problem.
\end{example}

This section introduced the intuition behind counterfactual scenarios,
highlighting some use cases. What follows will formally define our explanation
framework and present a detailed analysis of the computational complexity of
deciding the existence of such explanations.


\section{Preliminaries}
\label{sec:preliminaries}

For the sake of this paper, we focus on classical planning, adopting a general
formulation with arbitrary formulae as preconditions. While the definitions below easily generalize to numeric planning, we restrict ourselves to
the classical, finite-state case because numeric planning is in general
undecidable thus making all the explainability problems addressed in this paper
trivially undecidable as well.

\begin{definition}[Planning problem]
    \label{def:planning:problem}
A (ground) \textbf{planning problem} is a tuple $P=\tuple{F, A, I, G}$ where:
\begin{enumerate}
\item $F$ is a set of Boolean fluents;
\item $A$ is a set of actions, where each $a \in A$ comes with:
\begin{enumerate}
    \item a precondition $\pre{a}$ expressed as a formula over $F$; and
    \item a set of effects $\eff{a}$ of the form $f := v$ where $f \in F$ and $v \in \{\top, \bot\}$;\footnotemark
\end{enumerate} 
\item $I: F \rightarrow \{\top, \bot\}$ is the initial state encoded as a total assignment of truth values to fluents in $F$;
\item $G$ is the goal condition expressed as a formula over $F$.
\end{enumerate}
\end{definition}

\footnotetext{For the sake of simplicity, we assume that in every action there is at most one effect for a fluent $f$.}


\noindent
We indicate with $\mathcal{P}$ the set of all possible planning problems.

\begin{definition}[Plan state]
A \textbf{state} for a planning problem $P=\tuple{F, A, I, G}$ is a total assignment $s: F \rightarrow \{\top, \bot\}$ of values to fluents.
\end{definition}

\begin{definition}[Applicable action]
Given a planning problem $P=\tuple{F, A, I, G}$, an action $a \in A$ is \textbf{applicable} in state $s$ if the formula \pre{a} is satisfied by the assignment of $s$.
\end{definition}

\begin{definition}[Successor state]
Given a planning problem $P=\tuple{F, A, I, G}$ and a state $s$, the \textbf{successor $a(s)$ of an applicable action} $a \in A$ is a state such that:

\begin{equation*}
    a(s)(f) \coloneqq 
    \begin{cases}
        v & \text{if $f := v \in \eff{a}$} \\
        s(f) & \text{otherwise}
    \end{cases}
\end{equation*}
\end{definition}

\begin{definition}[Sequential plan]
\label{def:plan}
A \textbf{sequential plan} for a planning problem $P=\tuple{F, A, I, G}$ is a
sequence of actions $\pi=\tuple{a_0, \ldots, a_{n-1}}$ where $a_i \in A$. When
denoting $s_0 = I$ and $s_{i+1} = a_{i}(s_{i})$, we say $\pi$ is \textbf{valid}
if every $a_i$ is applicable in $s_{i-1}$ and $s_{n} \models G$.
\end{definition}

Given a planning problem $P = \tuple{F, A, I, G}$, we write $\plans{P}$ for the
set of all valid plans. Note that there is no reason for a plan in
\emph{classical} planning for visiting twice the same state, as any such plan
can be cut into a shorter one that does not. For this reason, we assume
\emph{w.l.o.g.}\ all plans in $\plans{P}$ to be devoid of state loops.

Our counterfactual scenarios are based on the notion of satisfaction of an \ltlf
formula~\cite{DeGiacomoV13}. \ltlf is a propositional modal logic interpreted
over \emph{finite words}. We consider here the standard syntax of \ltlf where a
formula $\psi$ is defined by the following grammar:
\begin{align*}
    \psi \coloneqq p & {} \mid \neg\psi \mid \psi\lor\psi \mid\psi\land\psi \\
                     & {} \mid \tomorrow \psi \mid \wtomorrow \psi \mid
                     \psi \mathrel{\mathcal{U}} \psi \mid 
                     \mathop{\Diamond} \psi \mid \mathop{\Box}\psi
\end{align*}
where $p\in F$ with $F$ being some finite set of \emph{propositions}. 

The semantics is defined as usual in the literature (we omit the formal
definition because of space concerns), where: $\tomorrow \psi$ and
$\wtomorrow\psi$ are the \emph{strong} and \emph{weak tomorrow} operators which
state the truth of $\psi$ at the next state (only if it exists, in the case of
the weak operator); $\psi_1 \mathrel{\mathcal{U}} \psi_2$ is the \emph{until}
operator stating that $\psi_2$ will happen in the future and $\psi_1$ holds
everywhere until then; $\mathop{\Diamond} \psi$ and $\mathop{\Box}\psi$ are the
\emph{eventually} and \emph{globally} operators, definable respectively as
$\mathop{\Diamond} \psi \equiv \top \mathrel{\mathcal{U}} \psi$ and
$\mathop{\Box}\psi\equiv\neg\Diamond\neg\psi$, which mandates $\psi$ to hold
somewhere in the future or always in the future.

We recall that the \emph{satisfiability} and \emph{validity} problems for \ltlf,
\ie deciding respectively whether there exists a word satisfying a given formula
or whether all words satisfy a given formula, are both
PSPACE-complete~\cite{DeGiacomoV13}, which also happens to be the same
computational complexity of the plan existence problem in classical
planning~\cite{Bylander94}. 

With a slight abuse of notation, we say that a plan $\pi$ satisfies an \ltlf
formula $\psi$ if $\tuple{s_0, s_1, \ldots, s_n} \models \psi$, where the $s_i$
are defined as in \cref{def:plan}. 

\section{Abstract Counterfactual Scenarios}
\label{sec:formalization}

In this section, we introduce a general and abstract formulation of our
counterfactual scenarios which can be instantiated into several different kind
of explanations depending on the case at hand. In \cref{sec:classes} we
instantiate the framework considering some interesting concrete classes of
counterfactuals which then will be studied in \cref{sec:results} from a
computational complexity perspective.

In the following, let $\allprobs$ denote the class of all the possible
\emph{planning problems} as in \cref{def:planning:problem}.


\begin{definition}[Possible changes]
    A \textbf{possible-change relation} is a relation $C \subseteq \allprobs
    \times \allprobs$.
\end{definition}

Intuitively, a possible-change relation describes abstractly which classes of
\emph{changes} are admitted. Then, a \emph{counterfactual scenario} can be
defined as a planning problem that can be obtained by a minimal sequence of
admitted changes. This notion is formally defined by introducing an abstract
\emph{transition system} whose worlds are planning problems.

\begin{definition}[Counterfactual transition system]
    Let $P$ be a classical planning problem, and let $C$ be a possible-change
    relation. A \textbf{counterfactual transition system} for $P$ and $C$ is the
    transition system $\mathcal{C}(P,C) \coloneqq \tuple{\allprobs, P, C}$
    where:
    \begin{enumerate}
        \item $\allprobs$ is the set of worlds of the transition system;
        \item $P$ is the initial world of the system.
        \item $C$ is the accessibility relation of the system;
    \end{enumerate}
\end{definition}

A path $\gamma$ in $\mathcal{C}(P,C)$ is a sequence of problems $\tuple{P_0,
P_1, \ldots, P_n}$ where $P_0 = P$ and $(P_i, P_{i+1})\in C$ for each $i \in [0,
n-1]$. The \emph{cost} of a path is its length $\abs{\gamma}$.
    
Given a planning problem $P'$ and a possible-change relation $C$, we denote as
$\allpaths{P'}$ all the paths ending in $P'$ in $\mathcal{C}(P,C)$. We denote as
$\minpath{P'}=\min_{\gamma \in \allpaths{P'}} \abs{\gamma}$ the minimum cost of
reaching $P'$.

\begin{definition}
\label{def:problem}
    Given a planning problem $P = \tuple{F, A, I, G}$, an \ltlf formula $\psi$ defined over the alphabet $F$ and a possible-change relation $C$, an \textbf{existential counterfactual scenario} is a planning problem $\explanation{\exists}{P, \psi, C}$ such that:
    \begin{equation*}
        \explanation{\exists}{P, \psi, C} \coloneqq 
        \argmin_{\substack{
            \text{$P' \in \allprobs$ s.t.}\\
            \text{$\exists \pi' \in \plans{P'}$ s.t.}\\
            \pi' \models \psi
        }} \minpath{P'}
    \end{equation*}

    \noindent
    Similarly, a \textbf{universal counterfactual scenario} is a planning problem $\explanation{\forall}{P, \psi, C}$ such that:
    \begin{equation*}
    \explanation{\forall}{P, \psi, C} \coloneqq 
    \argmin_{\substack{
        \text{$P' \in \allprobs$ s.t.}\\
        \text{$\plans{P'} \not= \emptyset$ and}\\
        \forall \pi' \in \plans{P'} \pi' \models \psi
    }} \minpath{P'}
    \end{equation*}
\end{definition}

\noindent
Note that existential and universal counterfactuals might not exist if there is
no problem $P'$ reachable in $\mathcal{C}(P,C)$ admitting a valid plan that
satisfies $\psi$.

Moreover, our definitions do not rule out the possibility that a counterfactual scenario for a problem $P$ is $P$ itself. However, this behaviour is unlikely to happen in practical settings and depends on the type of counterfactual scenario being sought. Typically, users of planning tools are interested in explaining a plan $\pi$ \emph{after} it has been generated by a planner. When searching for an existential counterfactual scenario, it is possible for $\explanation{\exists}{P, \psi, C}$ to be equal to $P$, because the planner might have picked a plan not satisfying $\psi$ non-deterministically from the pool of existing plans. However, if the user is interested in universal counterfactual scenarios, then $\explanation{\forall}{P, \psi, C}$ for $P$ cannot be $P$ itself since the existence of $\pi$ would violate Definition~\ref{def:problem}.

\section{Concrete Classes of Counterfactuals}
\label{sec:classes}

The definitions of existential and universal counterfactual scenarios given in
the previous section are intentionally quite abstract and general, and in
applications they need to be instantiated over concrete classes of
possible-change relations. In this section, we
describe three possible such instantiations, which define as many different
classes of counterfactual scenarios that are interesting in practice. Then,
\cref{sec:results} will study the associated decision problems from a
complexity-theoretic standpoint.
In particular, we focus on counterfactuals where we allow changes in the
\emph{initial state}, the \emph{goal conditions} or the \emph{action
preconditions}. For the sake of this paper, we exclude the possibility of
modifying the effects of actions for two reasons. First, for counterfactuals to
be useful in practice, they need to be \emph{plausible}: they should represent
scenarios that could realistically occur or be implemented, rather than propose
unachievable changes to an action's effects (\eg, proposing the complete removal
of all resource consumption following the execution of an action, which might be
practically impossible).
Second, from a technical standpoint, one needs to be very careful when choosing
which effects are allowed to be changed to avoid constructing trivial
counterfactuals (\eg, achieving the goal from the first action by adding a goal
state itself goal as an effect). Nonetheless, the framework introduced in the
previous section can easily accommodate the change of effects and one can
describe the plausibility constraints in the possible-change relation.

We start from the simplest instance of our framework, where only changes to the
\emph{initial state} are allowed. To formalise this requirement, we define a
possible-change relation that admits problems that differ for the initial state
only. Let $F$ be a set of fluents, then:

\begin{equation*}
C_{\mathit{init}} \coloneqq \left\{
    (P,P') \middle| 
    \begin{array}{@{}c@{}}
        P=\tuple{F, A, I, G},\,P'=\tuple{F, A, I', G}\\[0.5em]
        \text{$I=I'$ except at most one fluent}
    \end{array} 
\right\}
\end{equation*}

Note that single edits can change the assignment of at most one single fluent in
the initial state, so the distance between two different initial states is the
minimum number of edits needed to change one into the other. 

A slightly more involved definition arises when one wants to admit changes to
the \emph{goal conditions}. In contrast to initial states, goals are expressed
as arbitrary Boolean formulas, therefore a slight change may be more semantically
relevant than it appears. To capture this nuance, when comparing two different
goal conditions we count the number of \emph{differing truth assignments} of the
conditions, and we only admit to add or remove one truth assignment at a time,
obtaining the following definition:

\begin{equation*}
C_{\mathit{goal}} \coloneqq \left\{
    (P, P') \middle| 
    \begin{array}{@{}c@{}}
        P=\tuple{F,A,I,G},\, P'=\tuple{F,A,I,G'}\\[0.5em]
        \#\bigl[(G \wedge \neg G') \vee (G' \wedge \neg G)\bigr] = 1
    \end{array}
    \right\}
\end{equation*}
where $\#[\phi]$ is the number of \emph{truth assignments} of $\phi$.

Note that the definition of $C_{\mathit{goal}}$ considers both changes that
\emph{strengthen} the goal condition, \ie that remove goal states, and those
that \emph{weaken} the condition, \ie that add goal states. However, note that,
for \emph{existential} counterfactuals, \emph{strengthening} the goal is never
useful, because if a plan satisfying $\psi$ does not exist, \emph{removing} goal states will not help. Conversely, \emph{weakening} the goal is always sufficient, because adding a state reachable by a valid plan
always costs less than adding it and removing something else.

For \emph{universal} counterfactuals, instead, \emph{weakening} is never useful,
because if a plan \emph{not} satisfying $\psi$ exists, it cannot disappear by
\emph{adding} goal states. However, note that strengthening the goal is not the
only option in the universal case, because a counterfactual may be found by
considering an entirely disjoint set of goal states.

A class of counterfactual scenarios that is apparently similar to the latter is
that obtained by admitting changes to \emph{actions' preconditions}. In this
case we define a possible-change relation that admits only problems where the
only change is the precondition of a single action where a single truth
assignment has been added or removed.

\begin{equation*}
C_{\mathit{act}} \coloneqq \left\{
    \!(P, P') \middle| 
    \begin{array}{@{}c@{}}
        P=\tuple{F, A, I, G},\,P'=\tuple{F, A', I, G}\\[0.5em]
        A \setminus A' = A' \setminus A = \{a\}\\[0.5em]
        \eff{a} = \eff{a'}\\[0.5em]
        \#\left[{} \lor {} 
            \begin{aligned}
                & (\pre{a} \land \neg \pre{a'})\\
                & (\pre{a'} \land \neg \pre{a})
            \end{aligned}
        \right] = 1
    \end{array}
\right\}    
\end{equation*}

Note that for this class of counterfactuals we can observe, similarly to the
previous case, that in the \emph{existential} case only \emph{weakening} the
preconditions make sense. Although the definitions of $C_{\mathit{goal}}$ and
$C_{\mathit{act}}$ present many similarities, shortly we will see in
\cref{sec:results} that they are computationally quite different.

\smallskip\textbf{Defining plausible scenarios.} In concrete applications, not
all possible changes in either the initial state, the goal or the action
preconditions might be acceptable. For instance, in Example~\ref{ex:running},
\texttt{at(truck, depot)} and \texttt{at(truck, butchery)} cannot be both
initially true, otherwise the model would allow the truck to be in two locations
at the same time. Hence, one needs to impose additional \emph{plausibility
constraints} to control which changes are allowed to the planning problem. This
can be easily achieved by leveraging the expressive power of \ltlf
. In the following, let $\phi_{init}, \phi_{act}, \phi_{goal}$ denote three propositional formulae defining plausibility constraints for the initial state, action preconditions and goal conditions respectively.
To limit the changes in $C_{\mathit{init}}$, we simply conjoin the plausibility constraints (without temporal operators) in $\phi_{init}$ with $\psi$; in this way, we obviously constrain the possible initial states, because of the semantics of \ltlf. 
For $C_{\mathit{goal}}$, we conjoin the formula $\Diamond(\wtomorrow \bot \land \phi_{goal})$ into $\psi$. Essentially, we are forcing $\phi_{goal}$ to hold on all the final states of a plan (where $\wtomorrow \bot$ holds).
Finally, if we want to impose a propositional plausibility constraint on the
precondition of an action $a$, we assume \emph{w.l.o.g.}\ that there exists a
fluent $f_a$ that is set to true by the effects of action $a$ and falsified by
the effects of all the other actions. Then, we add the formula $\Box(\tomorrow f_a
\rightarrow \phi_{act})$, forcing the formula $\phi_{act}$ to hold in the state
where action $a$ has been applied.
In all three cases above, imposing plausibility constraints can be done by simply extending the formula $\psi$ without complicating the problem further. As such, we do not need to explicitly consider plausibility constraints in the theoretical analysis presented in the next section.

\section{The Hardness of Finding Counterfactuals}
\label{sec:results}

In this section we study the \emph{computational complexity} of finding
counterfactuals of the kinds introduced in \cref{sec:classes}. We will see that
the decision problem for most of them surprisingly remains PSPACE-complete,
which is the same complexity as finding a plan in the first place.
A complete picture of the results is available in \cref{tbl:results-overview}.

\begin{table}[t]
    \centering
    \renewcommand{\arraystretch}{2}
    \begin{tabular}{@{} c c c c @{}}\toprule
    & \bfseries Initial State & \bfseries Goals & \bfseries Action Preconditions \\[-0.8em]
    & $C_{\mathit{init}}$ & $C_{\mathit{goal}}$ & $C_{\mathit{act}}$ \\
    \midrule
    \bfseries $\Xi_{\exists}$ & PSPACE-c. & PSPACE-c. & PSPACE-c. \\[-1em]
     & \small \cref{thm:initial} & \small \cref{thm:goals:existential} & \small \cref{thm:acts:existential}\\
    \midrule
    \bfseries $\Xi_{\forall}$ & PSPACE-c. & PSPACE-c. & $\in$ NEXP\textsuperscript{NP} \\[-1em]
     & \small \cref{thm:initial} & \small \cref{thm:goals:universal} & \small \cref{thm:acts:universal:membership}\\
    \bottomrule
    \end{tabular}
    \caption{Overview of the computational complexity results.}
    \label{tbl:results-overview}
\end{table}

In our framework, the quality of plans is evaluated in terms of the
satisfaction of an \ltlf formula $\psi$ over the fluents $F$. Therefore, we need
a way to link the two worlds of planning and \ltlf. To this end, we can leverage a well-known result stating that any classical planning problem can be translated into a suitable
\ltlf formula that represents the plans of the problem and the states visited by
those plans.
\begin{proposition}[\citet{CialdeaMayerLOP07}]
    \label{prop:plantoltl}
    For any classical planning problem $P=\tuple{F,A,I,G}$ there exist:
    \begin{enumerate}
        \item an \ltlf formula $[P]$ over the alphabet $A\cup F$ such that from
            any model of $[P]$ one can extract in polynomial time a plan for 
            $P$ (not necessarily reaching the goal); 
        \item an \ltlf formula $[P]_G$ over the alphabet $A\cup F$ such that
            from any model of $[P]_G$ one can extract in polynomial time a 
            solution plan for $P$ (\ie reaching the goal); 
    \end{enumerate}
\end{proposition}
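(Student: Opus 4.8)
The plan is to give an explicit \ltlf encoding of the plan-execution semantics over the alphabet $A \cup F$, treating both actions and fluents as atomic propositions, so that at each position of a finite word the fluents encode a state and at most one action proposition marks the action applied there. I would then verify that the models of the formula correspond to the (state-annotated) plans of $P$, and that reading off the true action proposition at each position recovers a plan in linear time.

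Concretely, for $[P]$ I would take the conjunction of five groups of constraints. (i) An initial-state clause $\bigwedge_{I(f)=\top} f \wedge \bigwedge_{I(f)=\bot}\neg f$ pinning down $s_0 = I$. (ii) An occurrence constraint $\Box(\tomorrow\top \rightarrow \bigvee_{a\in A}(a \wedge \bigwedge_{b\neq a}\neg b))$ forcing exactly one action at every position that has a successor, together with $\Box(\wtomorrow\bot \rightarrow \bigwedge_{a\in A}\neg a)$ forbidding any action at the final position (here $\wtomorrow\bot$ characterises the last state of a finite word). (iii) Precondition axioms $\bigwedge_{a\in A}\Box(a \rightarrow \pre{a})$. (iv) Effect axioms $\bigwedge_{a\in A}\bigwedge_{(f:=v)\in\eff{a}}\Box(a \rightarrow \tomorrow \ell_{f,v})$, where $\ell_{f,\top}=f$ and $\ell_{f,\bot}=\neg f$. (v) Frame axioms asserting that a fluent may flip value only when an action responsible for that flip is applied, e.g. $\Box\bigl((\neg f \wedge \tomorrow f) \rightarrow \bigvee_{(f:=\top)\in\eff{a}} a\bigr)$ and the symmetric clause for $f \wedge \tomorrow\neg f$. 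For the goal variant I would set $[P]_G \coloneqq [P] \wedge \Diamond(\wtomorrow\bot \wedge G)$, forcing $G$ to hold at the unique final state.

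For correctness I would argue both directions. Soundness: given any finite word $\sigma = \langle\sigma_0,\dots,\sigma_n\rangle \models [P]$, clause (ii) lets me define $a_i$ as the unique action true at $\sigma_i$ for $i<n$; (i) gives $\sigma_0|_F = I$; (iii) gives applicability of each $a_i$ at $\sigma_i|_F$; and (iv) together with (v) force $\sigma_{i+1}|_F = a_i(\sigma_i|_F)$ exactly, so $\pi = \langle a_0,\dots,a_{n-1}\rangle$ is a valid plan whose induced state sequence is the fluent-projection of $\sigma$ — and for $[P]_G$ the extra conjunct additionally makes $\sigma_n|_F \models G$, yielding a solution plan. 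Extraction is a single linear pass over $\sigma$. Completeness (that every plan arises from a model) follows by annotating the induced state sequence with the applied-action propositions and checking each clause, which certifies that $[P]$ and $[P]_G$ are satisfiable exactly when $P$ has a plan, respectively a solution plan.

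The main obstacle is getting the frame axioms and the finite-trace boundary conditions to pin down the successor state \emph{uniquely} rather than merely consistently: without (v) a model could leave untouched fluents free to change arbitrarily, breaking the correspondence with plans, while mishandling $\tomorrow$ versus $\wtomorrow$ at the final position could either spuriously demand a successor or wrongly permit an action at the end. The delicate part of the verification is therefore the induction establishing $\sigma_{i+1}|_F = a_i(\sigma_i|_F)$, splitting on whether each fluent is an effect of $a_i$ (handled by (iv)) or not (handled by (v)).
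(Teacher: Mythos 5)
Your proposal is correct and reconstructs precisely the standard ``planning as LTL satisfiability'' encoding that the paper relies on: the paper itself offers no proof of this proposition, only the citation to Cialdea Mayer et al.\ and the remark that extraction is a projection of the action propositions, which is exactly your linear read-off of the unique action true at each position. Your construction also matches the paper's own conventions elsewhere (\eg the $\Diamond(\wtomorrow\bot \land G)$ conjunct for pinning the goal to the final state is the same device the paper uses for goal plausibility constraints), and your handling of the frame axioms and the $\tomorrow$/$\wtomorrow$ boundary conditions is sound.
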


The extraction in polynomial time is actually a simple projection of the
propositions from $A$ in the model of the \ltlf formula: if proposition $a$ is
true at a given time point it means action $a$ is executed in that step.
Similarly, the propositions from $F$ encode the states visited by the plan.

Since the counterfactual scenarios are \emph{minimal} objects, minimizing the
number of edits, finding one is an \emph{optimization} problem. Therefore, as is
customary in complexity theory, our analysis is concerned with the
\emph{decision problem} associated with such an optimization problem.
\begin{definition}[Counterfactual scenario existence problem]
    \label{def:decision:problem}
    Let $C\subseteq \allprobs \times \allprobs$ be a possible-change relation.
    The \textbf{existential} \textbf{counterfactual scenario existence problem}
    under $C$, denoted $\csep_\exists(C)$, is the problem of deciding, given a
    planning problem $P$, an \ltlf formula $\psi$, and a budget $K\ge0$, whether
    $\explanation{\exists}{P, \psi, C}$ exists and
    $\pathcost{\explanation{\exists}{P, \psi, C}}\le K$.
    
    The \textbf{universal} counterfactual scenario existence problem under $C$,
    denoted $\csep_\forall(C)$, is defined similarly over
    $\explanation{\forall}{P, \psi, C}$.
\end{definition}

Our first result is that counterfactual scenario existence problems are, in
general, PSPACE-hard. 

\begin{theorem}
    \label{thm:hardness}
    Let $F$ be a set of fluents and $C\subseteq \allprobs \times \allprobs$ be a
    possible-change relation over $F$. Then, both $\csep_\exists(C)$ and
    $\csep_\forall(C)$ are \textbf{PSPACE-hard}.
\end{theorem}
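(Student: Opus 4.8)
The plan is to establish hardness already in the most degenerate regime of the problem, where the budget forbids applying \emph{any} change at all. Since the claimed bound must hold for \emph{every} possible-change relation $C$, I will design a single reduction that never traverses an edge of $C$, making the argument completely insensitive to the particular $C$ at hand.

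First I would fix the budget to $K=0$ and analyse what the decision problem becomes. By definition the cost of a path is its length, so the only path in $\mathcal{C}(P,C)$ of cost at most $0$ is the trivial one-element path $\tuple{P}$; hence $\minpath{P'}\le 0$ forces $P'=P$. Consequently $\csep_\exists(C)$ with $K=0$ is a yes-instance iff $P$ itself admits a valid plan $\pi\models\psi$, and $\csep_\forall(C)$ with $K=0$ is a yes-instance iff $\plans{P}\neq\emptyset$ and every $\pi\in\plans{P}$ satisfies $\psi$. (In each case the minimizer exists and attains cost $\le K$ exactly under these conditions.)

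Second, I would collapse $\psi$ to a tautology, e.g.\ $\psi \coloneqq p\lor\neg p$ for some fluent $p\in F$ (adding a dummy fluent if $F=\emptyset$), which is satisfied by every finite word and hence by every valid plan. With this choice both questions above become identical: they hold iff $P$ admits at least one valid plan. I would then reduce from the plan-existence problem for classical planning, which is PSPACE-complete~\cite{Bylander94}: given a classical planning problem $P$, the map $P\mapsto\tuple{P,\,p\lor\neg p,\,0}$ is computable in logarithmic space and, by the observations above, yields a yes-instance of $\csep_\exists(C)$ (resp.\ $\csep_\forall(C)$) iff $P$ is solvable. This establishes PSPACE-hardness of both problems.

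The argument has no genuine obstacle; its only delicate point — which I would state explicitly — is to verify that the existence requirement in \cref{def:decision:problem} lines up with plan existence in the universal case. When $\psi$ is a tautology, the non-emptiness clause $\plans{P}\neq\emptyset$ is precisely solvability, while the clause that all plans satisfy $\psi$ is vacuous, so no spurious answers arise and the universal counterfactual coincides with $P$ exactly when $P$ is solvable. Because $K=0$ guarantees that no edge of $C$ is ever used, the very same reduction witnesses hardness uniformly across all $C$, as required.
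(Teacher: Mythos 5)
Your proof is correct and follows essentially the same strategy as the paper's: a reduction from plan existence with a tautological $\psi$ and budget $K=0$, so that no edge of $C$ can be used and both the existential and universal problems collapse to solvability of $P$. Your treatment is slightly more explicit (spelling out the $K=0$ path analysis, using $p\lor\neg p$ in place of $\top$, and checking the non-emptiness clause in the universal case), but the underlying argument is identical.
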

\begin{proof}
    We go by reduction from the plan existence problem of classical
    planning which is known to be PSPACE-complete~\cite{Bylander94}. Given
    $P=\tuple{F,A,I,G}$, it is sufficient to ask for an existential or universal
    counterfactual scenario for a trivial formula $\psi\coloneqq\top$ and a cost
    $K=0$. The formula is trivially always satisfied, and the zero cost ensure
    to be unable to make any change to $P$. Then, since both the existential and
    the universal definitions require the counterfactual to have at least a
    solution, the counterfactual exists if and only if $P$ has a solution in the
    first place.
\end{proof}

Following \cref{thm:hardness}, we can now focus specifically on showing that the
decision problems for most of our particular classes of counterfactual scenarios
\emph{belong} to PSPACE and are therefore PSPACE-complete.

As in \cref{sec:classes}, we start from the class of
counterfactuals where only changes to the \emph{initial state} are admitted.

\begin{theorem}
    \label{thm:initial}
    Both $\csep_\exists(C_{\mathit{init}})$ and
    $\csep_\forall(C_{\mathit{init}})$ are \textbf{PSPACE-complete}.
\end{theorem}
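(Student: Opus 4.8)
By \cref{thm:hardness} both problems are already PSPACE-hard, so it remains only to prove membership in PSPACE. The plan is to reduce each question to a bounded search over candidate initial states, wrapped around an \ltlf (un)satisfiability test that I obtain via \cref{prop:plantoltl}.

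First I would pin down the cost. Under $C_{\mathit{init}}$ a single edit flips exactly one fluent in the initial state, so for any $P'=\tuple{F,A,I',G}$ reachable from $P$ the cost $\pathcost{P'}$ equals the Hamming distance $d(I,I')=\abs{\{f\in F : I(f)\neq I'(f)\}}$, while $P'$ is unreachable unless it shares $F$, $A$ and $G$ with $P$. Hence $\explanation{\exists}{P,\psi,C_{\mathit{init}}}$ has cost at most $K$ iff there exists an assignment $I'$ with $d(I,I')\le K$ such that the problem $\tuple{F,A,I',G}$ admits a plan satisfying $\psi$; the universal case is identical except for the quantifier over plans. In particular I never need the exact minimum: existence of a witness within the budget suffices.

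Next I would give the algorithm. An outer loop enumerates all $2^{\abs{F}}$ candidate states $I'$ using an $\abs{F}$-bit counter, discarding those with $d(I,I')>K$; crucially this loop needs only $O(\abs{F})$ working space, reused across iterations. For each surviving $I'$ I form $P'=\tuple{F,A,I',G}$ and run an inner test. For $\csep_\exists(C_{\mathit{init}})$ I appeal to \cref{prop:plantoltl}: a valid (goal-reaching) plan of $P'$ whose visited-state trace satisfies $\psi$ exists iff the \ltlf formula $[P']_G\land\psi$ over $A\cup F$ is satisfiable, since projecting a model onto $A$ yields the plan and projecting onto $F$ yields exactly the states $\tuple{s_0,\dots,s_n}$ against which $\psi$ is evaluated. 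Because $[P']_G$ is polynomial in $P'$ and computable in polynomial time, and \ltlf satisfiability is PSPACE-complete, the inner test runs in polynomial space; accepting iff some candidate passes yields a PSPACE procedure.

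The universal case is where I expect the only real subtlety, and it is handled by dualisation rather than by any new machinery. Non-emptiness of $\plans{P'}$ is simply satisfiability of $[P']_G$, while the requirement that \emph{every} valid plan satisfies $\psi$ holds iff \emph{no} valid plan violates it, i.e.\ iff $[P']_G\land\neg\psi$ is \emph{unsatisfiable}. This is an \ltlf validity-style check, and since PSPACE is closed under complement it too lies in PSPACE; conjoining it with the non-emptiness test keeps the inner step in polynomial space, and the same bounded enumeration then shows $\csep_\forall(C_{\mathit{init}})\in\text{PSPACE}$. Together with \cref{thm:hardness} this gives PSPACE-completeness for both. The two points I would be most careful about are that the exponential outer search is realised with reused polynomial space (equivalently, one may guess $I'$ nondeterministically and invoke Savitch's theorem, since $\mathrm{NPSPACE}=\mathrm{PSPACE}$), and that the universal quantifier over the exponentially many (and exponentially long) plans is discharged by complementation of an \ltlf test rather than by explicit plan enumeration.
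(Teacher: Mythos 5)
Your proof is correct and takes essentially the same route as the paper's: enumerate candidate initial states within Hamming distance $K$ of $I$ in reused polynomial space, and for each candidate $P'$ test satisfiability of $[P']_G\land\psi$ (existential) or validity of $[P']_G\to\psi$ (universal), with hardness inherited from \cref{thm:hardness}. One remark: your explicit non-emptiness check ($[P']_G$ satisfiable) in the universal case is actually more careful than the paper's proof, which tests only the validity of $[P]_G\to\psi$ and, read literally, would accept a modified problem admitting no valid plans at all, contrary to the requirement $\plans{P'}\neq\emptyset$ in \cref{def:problem}.
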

\begin{proof}
    Let $P=\tuple{F,A,I,G}$, $\psi$ an \ltlf formula, and let $K\ge0$ be the
    cost. We proceed as follows. If $K=0$, we define $I'=I$. Otherwise, we loop
    through all the possible alternative initial states $I'$ which differs from
    $I$ of at most $K$ fluents. In both cases, for all the considered $I'$, we
    test the satisfiability of $\psi\land[P]_G$, in the case of
    $\csep_\exists(C_{\mathit{init}})$, or the validity of $[P]_G\to\psi$ in
    the case of $\csep_\forall(C_{\mathit{init}})$ (see also
    \cref{prop:plantoltl}). If we find any $I'$ for which the test is
    successful, we found an existential (or universal) counterfactual, otherwise
    we reply negatively. Since satisfiability and validity for \ltlf is
    PSPACE-complete, and we only need an additional polynomial number of bits to
    count all the possible exponential initial states $I'$, the above procedure
    uses polynomial space, so $\csep_\exists(C_{\mathit{init}})$ and
    $\csep_\forall(C_{\mathit{init}})$ are PSPACE-complete.
\end{proof}

Let us now consider counterfactuals where the \emph{goal condition} can be
edited. Here, the existential and universal counterfactuals need to be addressed
separately.

\begin{theorem}
    \label{thm:goals:existential}
    $\csep_\exists(C_{\mathit{goal}})$ is
    \textbf{PSPACE-complete}.
\end{theorem}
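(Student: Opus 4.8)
The plan is to establish PSPACE-completeness for $\csep_\exists(C_{\mathit{goal}})$ by proving membership in PSPACE, since hardness already follows from \cref{thm:hardness}. The key observation to exploit is the one already made in \cref{sec:classes}: for \emph{existential} counterfactuals, strengthening the goal is never useful, and weakening the goal is always sufficient. This means that when searching for an existential counterfactual over $C_{\mathit{goal}}$, I only need to consider goal conditions $G'$ obtained by \emph{adding} satisfying assignments to $G$. Since each edit along a path in $\mathcal{C}(P,C_{\mathit{goal}})$ adds (or removes) exactly one truth assignment, the cheapest way to make a particular plan $\pi$ satisfy $\psi$ and reach the goal is to add precisely the final state $s_n$ of $\pi$ as a new goal state (if it is not already a goal state). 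Crucially, the cost $\minpath{P'}$ of such a weakening equals the number of assignments added, i.e.\ essentially one per newly-admitted final state.

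First I would reformulate the problem so that the goal change is absorbed into the \ltlf side. The trick is that we do not need to existentially guess an explicit formula $G'$; instead we guess a plan-with-final-state and check that its final state $s_n$ is \emph{not} already in $G$ (so that it genuinely requires an edit) only when counting cost, while requiring that the plan satisfies $\psi$. Concretely, I would use \cref{prop:plantoltl} to obtain the formula $[P]$ encoding arbitrary (not-necessarily-goal-reaching) plans of $P$, together with their visited states, over the alphabet $A \cup F$. A plan that we wish to certify as a witness need only be a \emph{valid action sequence} from $I$ (applicability respected at each step) that satisfies $\psi$; whether it reaches the original goal is irrelevant, because we are free to weaken $G$ to admit its endpoint. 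The cost of the associated counterfactual is the number of distinct final states of such witnessing plans that lie outside $G$ — but since an existential counterfactual only needs \emph{one} plan satisfying $\psi$, it suffices to admit the single final state of one witnessing plan, giving cost at most $1$ (or cost $0$ if that final state is already a goal state).

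The core of the argument is then the following decision procedure. Given the budget $K$: if $K \ge 1$, I test the satisfiability of $\psi \land [P]$ (the encoding of arbitrary valid plans, \emph{not} $[P]_G$). If it is satisfiable, a witnessing model yields, by projection, a valid action sequence $\pi$ from $I$ satisfying $\psi$; setting $G'$ to be $G$ with the single final state $s_n$ of $\pi$ added makes $\pi$ a valid solution plan of $P'$ satisfying $\psi$, and $\minpath{P'} \le 1 \le K$, so an existential counterfactual of cost at most $K$ exists. If $K = 0$, no edit is allowed, so I instead test satisfiability of $\psi \land [P]_G$ exactly as in \cref{thm:initial}. Conversely, if a counterfactual $P' = \tuple{F,A,I,G'}$ of cost $\le K$ exists with a plan $\pi' \models \psi$, then by the weakening observation we may assume $G' \supseteq G$, and $\pi'$ is in particular a valid action sequence of $P$ (initial state and actions are unchanged) satisfying $\psi$, so $\psi \land [P]$ is satisfiable. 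Since \ltlf satisfiability is PSPACE-complete (\cref{sec:preliminaries}) and $[P]$, $[P]_G$ are polynomial-size, the whole test runs in polynomial space, establishing membership in PSPACE.

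The main obstacle I anticipate is making the cost bookkeeping rigorous, in particular justifying cleanly that the minimal cost collapses to at most $1$. The subtlety is that $\minpath{P'}$ counts \emph{edits} (each adding/removing one truth assignment), and I must argue that the minimal sequence of single-assignment edits reaching a goal-weakened $P'$ that admits a $\psi$-satisfying plan never needs to remove assignments and need only add the one final state in question — this is exactly the ``adding a reachable state costs less than adding it and removing something else'' reasoning foreshadowed in \cref{sec:classes}, which I would need to state as a short lemma-style observation. I would also need to handle the edge case where the final state of the witnessing plan already satisfies $G$ (cost $0$) versus when it does not (cost $1$), and confirm that guessing/looping is not required because \ltlf satisfiability already internalises the existential search over plans.
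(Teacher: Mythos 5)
Your proposal is correct and takes essentially the same route as the paper's proof: both exploit the goal-weakening observation from \cref{sec:classes}, test satisfiability of $\psi \land [P]$ (via \cref{prop:plantoltl}) in polynomial space, and, when a model exists and $K \ge 1$, add its final state as a single new goal assignment to obtain a counterfactual of cost at most $1$, with hardness inherited from \cref{thm:hardness}. In fact your treatment of the $K=0$ case is slightly more careful than the paper's, which as written tests $\psi\land[P]$ rather than $\psi\land[P]_G$ there, whereas a zero-budget counterfactual requires a $\psi$-satisfying plan that actually reaches the original goal $G$.
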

\begin{proof}
    In this class of counterfactuals, changing the goal can be done by adding or
    removing truth assignments of the goal condition. Recall that, as observed
    in \cref{sec:classes}, for the existential counterfactuals, only
    \emph{weakening} the goal makes sense, \ie \emph{adding} truth assignments.
    
    Let $\theta\coloneqq \psi\land[P]$ (see \cref{prop:plantoltl}), and we test
    the satisfiability of $\theta$ (which can be done in polynomial space, as we
    recall). Then:
    \begin{enumerate}
        \item if $K=0$, and $\theta$ is satisfiable, $P$ is an existential 
            counterfactual for itself of cost zero, so we reply positively; 
            otherwise, we reply negatively;
        \item if $K>0$, and $\theta$ is satisfiable, then we extract the last
            state $s$ reached by the corresponding model of $\theta$, and we
            define $P'=\tuple{F,A,I,G'}$ where $G'\coloneqq G \lor s$; then we
            know a plan for $P'$ satisfying $\psi$ exists; if $\theta$ is
            unsatisfiable, we reply negatively.
    \end{enumerate}
    All of the above can be done in polynomial space, therefore
    $\csep_\exists(C_{\mathit{goal}})$ is PSPACE-complete. 
\end{proof}

The case of $\csep_\forall(C_{\mathit{goal}})$ is a bit more involved. Our
solution is shown in \cref{algo:goal:strengthening}. Intuitively, the algorithm
counts how many goal states are reachable by plans that do \emph{not} satisfy
$\psi$ (lines~\ref{algo:line:for} to \ref{algo:line:endfor}). This amount of
goal states have to be removed from the goal in any case, so we do not actually
\emph{store} them, we only count how many they are. Then, if the amount is
greater than the maximum allowed cost, we reply negatively (line
\ref{algo:line:greater}). Otherwise, we have to be careful. The definition of
universal counterfactual (\cref{def:problem}) forbids trivial solutions where
the problem has no plans at all (and hence trivially all their plans would
satisfy $\psi$). So we need to check whether there actually exists at least a
plan satisfying $\psi$. However, if we already reached the maximum allowed cost
(line \ref{algo:line:equal}), such a plan must reach the original goal
condition, because we do not have margin to add anything within the cost, so
return positively if and only if a plan of the original problem exists that
satisfies $\psi$ (line \ref{algo:line:equal:return}). If instead the amount of
assignments to be removed is strictly below the allowed cost (line
\ref{algo:line:below}), we have margin to ensure the set of solution plans is
not empty, so we reply positively if and only if there is a plan reaching any
goal state while satisfying $\psi$~(line \ref{algo:line:below:return}). Let us
now prove formally the soundness of \cref{algo:goal:strengthening}.

\begin{algorithm}[t]
    \caption{Algorithm for $\csep_\forall(C_{\mathit{goal}})$}
    \label{algo:goal:strengthening}
    \begin{algorithmic}[1]
        \Require $P=\tuple{F,A,I,G}$, $\psi$ \ltlf, $K\ge0$
        \State $\mathrm{counter}\gets0$
        \For{every truth assignment $\nu\models G$}\label{algo:line:for}
            \If{$[P]_\nu \land \neg\psi$ is satisfiable}
                \State $\mathrm{counter}\gets\mathrm{counter}+1$
            \EndIf
        \EndFor\label{algo:line:endfor}
        \If{$\mathrm{counter} > K$}\label{algo:line:greater}
            \State \Return \emph{false}\label{algo:line:greater:return}
        \ElsIf{$\mathrm{counter} = K$}\label{algo:line:equal}
            \State \Return whether $[P]_G \land \psi$ is satisfiable
            \label{algo:line:equal:return}
        \Else\label{algo:line:below}
            \State \Return whether $[P] \land \psi$ is satisfiable
            \label{algo:line:below:return}
        \EndIf
    \end{algorithmic}
\end{algorithm}

\begin{theorem}
    \label{thm:goals:universal}
    $\csep_\forall(C_{\mathit{goal}})$ is \textbf{PSPACE-complete}.
\end{theorem}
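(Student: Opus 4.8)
The plan is to establish PSPACE-completeness by pairing the general hardness result with a proof that \cref{algo:goal:strengthening} decides $\csep_\forall(C_{\mathit{goal}})$ in polynomial space. Hardness is immediate: \cref{thm:hardness} already yields PSPACE-hardness for every possible-change relation, so nothing further is required on that side and the entire burden falls on correctness and the space bound of the algorithm.

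First I would isolate the structural fact that drives the algorithm. As observed in \cref{sec:classes}, weakening the goal is useless in the universal case, so every useful edit is either the removal of a goal state or the addition of a fresh one to guarantee non-emptiness. Call a goal state $\nu\models G$ \emph{bad} if some plan ending in $\nu$ violates $\psi$, i.e.\ if $[P]_\nu\land\neg\psi$ is satisfiable. The key lemma to prove is that a problem $P'=\tuple{F,A,I,G'}$ meets the universality condition (all valid plans satisfy $\psi$) precisely when $G'$ retains no bad state, and that keeping the good states of $G$ is never worse than any alternative. From this it follows that the minimum number of deletions forced on \emph{any} universal counterfactual equals the value $\mathrm{counter}$ computed in lines~\ref{algo:line:for} to~\ref{algo:line:endfor}, and in particular that no entirely disjoint goal set can beat this bound.

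Next I would verify the three-way case analysis. If $\mathrm{counter}>K$ there is not even enough budget to delete all bad states, so no universal counterfactual of cost $\le K$ exists and line~\ref{algo:line:greater:return} is sound and complete. If $\mathrm{counter}<K$ there is spare budget, so after deleting the bad states one may, when needed, add a single reachable state visited by a $\psi$-satisfying plan to force $\plans{P'}\neq\emptyset$; hence a witness of cost $\le K$ exists iff $[P]\land\psi$ is satisfiable (line~\ref{algo:line:below:return}). The delicate case is $\mathrm{counter}=K$: the mandatory deletions exhaust the budget, so no state may be added, and a witness exists iff the tightened goal still admits a $\psi$-satisfying plan, tested via satisfiability of $[P]_G\land\psi$ (line~\ref{algo:line:equal:return}). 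I expect this $\mathrm{counter}=K$ case to be the main obstacle: it is where the non-emptiness requirement of \cref{def:problem} must be reconciled with the absence of remaining budget, and where soundness and completeness are most tightly coupled, so the argument must carefully tie the chosen witness plan to a goal state that survives the deletions.

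Finally, for the complexity bound, each satisfiability test invokes \cref{prop:plantoltl} together with the PSPACE upper bound for \ltlf satisfiability, hence runs in polynomial space. The loop ranges over exponentially many assignments $\nu$, but these are enumerated with a counter of polynomially many bits, and the working space of each test is reused, so $\mathrm{counter}$ itself needs only polynomially many bits. Thus the whole procedure runs in polynomial space, giving membership in PSPACE and, with \cref{thm:hardness}, PSPACE-completeness.
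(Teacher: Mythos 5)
Your overall strategy coincides with the paper's: hardness inherited from \cref{thm:hardness}, membership via correctness and a polynomial-space analysis of \cref{algo:goal:strengthening}, the same lower-bound argument that every ``bad'' goal state (one reached by some $\psi$-violating plan; call this set $S$) must be deleted so that $\mathrm{counter}=\abs{S}$ bounds the cost of any universal counterfactual from below, and the same three-way case analysis. So there is no methodological divergence to discuss; the question is whether the sketch closes all the steps.

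It does not, and the step you yourself single out as the ``main obstacle'' is a genuine gap rather than a routine verification. The biconditional you assert for the $\mathrm{counter}=K$ case --- a witness of cost $\le K$ exists iff $[P]_G\land\psi$ is satisfiable --- is false as stated: a model of $[P]_G\land\psi$ yields a $\psi$-satisfying plan ending in \emph{some} goal state $s$, but $s$ may itself belong to $S$, because some \emph{other} plan ending in $s$ violates $\psi$. Then $s$ must be deleted, the witness is not a valid plan of $\tuple{F,A,I,G\setminus S}$, and with the budget exhausted nothing can be added back, so $\plans{P'}$ may be empty for the only affordable $G'=G\setminus S$; the test then answers \emph{true} on instances admitting no universal counterfactual of cost $\le K$. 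The $\mathrm{counter}<K$ case suffers the same flaw (your key lemma only declares \emph{goal} states bad, but newly added states can be bad too): the final state $s$ of a plan witnessing $[P]\land\psi$ may also be the endpoint of a $\psi$-violating plan, making $G\setminus S\cup\{s\}$ non-universal; if every reachable state has this property, no universal counterfactual exists at any cost even though $[P]\land\psi$ is satisfiable. You should know that the paper's own written proof is loose at exactly the spot you flagged --- it likewise infers non-emptiness from satisfiability of $[P]_G\land\psi$, or adds ``the goal state reached by such a plan'', without checking that this state survives the deletions or is safe to add --- so your instinct located a real soft spot, but flagging it is not fixing it. The repair is to make the final tests per-state: accept iff there exists an assignment $\nu$ (required to satisfy $G$ when $\mathrm{counter}=K$, arbitrary when $\mathrm{counter}<K$) such that $[P]_\nu\land\psi$ is satisfiable and $[P]_\nu\land\neg\psi$ is unsatisfiable; such a $\nu$ is reachable, and reachable only by $\psi$-satisfying plans, so $G\setminus S$ (resp.\ $G\setminus S\cup\{\nu\}$) is a universal counterfactual within budget, and conversely any universal counterfactual within budget must contain such a $\nu$. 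Enumerating the $\nu$'s and running the two \ltlf tests still takes polynomial space, so the theorem itself stands; it is the correctness argument --- yours as sketched, and the paper's as written --- that needs this extra step.
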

\begin{proof}
    We want to prove that \cref{algo:goal:strengthening} returns \emph{true} if
    and only if there exists a universal counterfactual scenario
    $\explanation{\forall}{P, \psi, C_{\mathit{goal}}}$ of cost less than or
    equal to $K$. So suppose that the latter is the case and let
    $\explanation{\forall}{P, \psi, C_{\mathit{goal}}}=\tuple{F,A,I,G'}$. In
    this proof, with some abuse of notation, we will denote as $G$ and $G'$ the
    \emph{set of goal states} that satisfy the $G$ and $G'$ formulas. As one can
    easily go from one representation to the other, this should not cause any
    ambiguity.

    Let us now define as $S\subseteq G$ the subset of goal states reached by any
    plan of $P$ that does \emph{not} satisfy $\psi$. Note that such goal states
    must in any case be removed from $G$ so the cost of
    $\explanation{\forall}{P, \psi, C_{\mathit{goal}}}$ is at least $\abs{S}$.
    Since the cost of $\explanation{\forall}{P, \psi, C_{\mathit{goal}}}$
    stays within $K$ we know $\abs{S}\le K$, otherwise we would need to overflow
    the cost to edit $G$ to $G'$. Therefore the $\mathrm{counter}$ variable at
    the end of the \emph{for loop} in \cref{algo:goal:strengthening} must be
    less than or equal to $K$ so the algorithm does \emph{not} reply
    \emph{false} in \cref{algo:line:greater:return}. So now we have two cases:
    \begin{enumerate}
        \item if $\abs{S}=K$, we know $G'=G\setminus S$, because we can only
        afford to remove $S$ and not to edit $G$ in any other way. Now, since
        $\explanation{\forall}{P, \psi, C_{\mathit{goal}}}$ is a universal
        counterfactual and by definition it has at least one plan satisfying
        $\psi$, the check at \cref{algo:line:equal:return} returns \emph{true}.
        \item If $\abs{S}<K$, then $G'=G\setminus S \cup S'$ where $S'$ is some
        set of goal states that $G'$ adds with regards to $G$. Then, since
        $\explanation{\forall}{P, \psi, C_{\mathit{goal}}}$ is a universal
        counterfactual and by definition it has at least one plan satisfying
        $\psi$, the check at \cref{algo:line:below:return} returns \emph{true}.
    \end{enumerate}

    \Viceversa, suppose \cref{algo:goal:strengthening} returns \emph{true}. We
    build a set $G'$ of goal states that will define the counterfactual
    $\explanation{\forall}{P, \psi, C_{\mathit{goal}}}=\tuple{F,A,I,G'}$. Let
    $S\subseteq G$ be the subset of goal states reached by any plan of $P$ that
    does \emph{not} satisfy $\psi$. Its size $\abs{S}$ is the value the variable
    $\mathrm{counter}$ holds at line~\ref{algo:line:endfor}. Since we return
    \emph{true} we know $\abs{S}\le K$, so we have two cases:
    \begin{enumerate}
        \item we return \emph{true} at line~\ref{algo:line:equal:return}, hence
        $\abs{S}=K$ and $[P]_G\land \psi$ is satisfiable. So we set
        $G'=G\setminus S$ to build our counterfactual: any plan reaching a goal
        state in $G'$ must satisfy $\psi$ (because it \emph{does not} satisfy
        $\neg\psi$), and we know the set of plans is not empty because
        $[P]_G\land \psi$ is satisfiable.
        \item We return \emph{true} at line~\ref{algo:line:below:return}, hence
        $\abs{S}<K$ and $[P]\land\psi$ is satisfiable. So there exists a plan of
        $P$ (not necessarily reaching $G$, see \cref{prop:plantoltl}) that
        satisfies $\psi$. Let $s$ be the goal state reached by such a plan. We
        set $G'=G\setminus S\cup\{s\}$ to build our counterfactual: in this way
        we ensure the set of plans satisfying $\psi$ is not empty, and we stay
        below the cost $K$.\qedhere
    \end{enumerate}
\end{proof}

Let us now focus on the last class of counterfactuals considered in
\cref{sec:classes}, namely those where \emph{actions' preconditions} are allowed
to change. As we will see, even though the existential counterfactuals are again
relatively easy to find (\ie still in polynomial space), \emph{universal}
counterfactuals may be more involved. We start with the easier case, which
nevertheless requires some additional background. Given a finite set of
propositions $F$, let $\omega:F\to\mathbb{N}$ be a \emph{weighting function} for
$F$. Then, given an \ltlf formula over $F$, one may ask for a word of
\emph{minimum weight}, where the weight of a word is the sum of the weights of
all the propositions true in any time step. Then, the following is known:
\begin{proposition}[\citet{DodaroFG22}]
    Deciding whether an \ltlf formula has a model of weight at most $k$ is
    \emph{PSPACE-complete}. A minimal model can be produced in polynomial space
    as well.
\end{proposition}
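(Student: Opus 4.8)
The plan is to establish PSPACE-hardness and PSPACE-membership separately, and then to argue that a minimum-weight model can be emitted on a write-only tape within polynomial space.

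\textbf{Hardness.} Hardness is immediate from the PSPACE-completeness of plain \ltlf satisfiability~\cite{DeGiacomoV13}: given a formula, set $\omega(p)=0$ for every $p\in F$ and ask for a model of weight at most $k=0$. Every word then has weight $0$, so such a model exists if and only if the formula is satisfiable.

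\textbf{Membership.} For membership I would use the automaton-theoretic characterisation of \ltlf: a formula $\psi$ is satisfied exactly by the finite words in the language of a nondeterministic automaton $A_\psi$ over the alphabet $2^F$, whose states are sets of subformulas of $\psi$ (hence there are at most $2^{O(|\psi|)}$ of them) and whose initial test, acceptance test, and single-step transition relation are all checkable in polynomial time on the polynomially sized state descriptions~\cite{DeGiacomoV13}. Assigning to each letter $\sigma\in 2^F$ the weight $\sum_{p:\sigma(p)=\top}\omega(p)$ turns the search for a minimum-weight model into a minimum-weight accepting-path problem in $A_\psi$. The central observation is that, because all weights are nonnegative, any accepting run that revisits a state can be shortened by splicing out the intervening loop without increasing its weight; hence some minimum-weight accepting run is simple and therefore has length at most $|Q|=2^{O(|\psi|)}$. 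This lets me decide the problem with a nondeterministic procedure that guesses the run one transition at a time while maintaining only the current state (polynomial), a running weight capped at $k+1$ (polynomially many bits, since $k$ is given in binary), and a step counter bounded by $|Q|$ (again polynomially many bits); the branch accepts as soon as it reaches an accepting state with accumulated weight $\le k$. This runs in nondeterministic polynomial space, and PSPACE-membership follows by Savitch's theorem.

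\textbf{Producing a minimal model.} To emit an actual minimum-weight model in polynomial space, I would work with the shortest-path distance $d(q)$, defined as the minimum weight of a word driving $q$ into the accepting set (with $d(q)=0$ when $q$ is itself accepting). Each $d(q)$ is at most $|Q|$ times the maximum letter weight, hence representable in polynomially many bits, and can be computed by binary search issuing polynomially many instances of the decision procedure above. I would then output the word greedily: starting from the initial state $q_0$, at each reached state $q$ I pick any letter $\sigma$ and successor $q'$ with $q\xrightarrow{\sigma}q'$ and $\mathrm{weight}(\sigma)+d(q')=d(q)$, write $\sigma$ to the output tape, move to $q'$, and stop once a reached accepting state has $d(q)=0$. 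The loop-free bound guarantees termination within $|Q|$ letters, and since only the current state, a handful of candidate successors, and the counters are ever stored (the model itself being streamed rather than retained), the whole procedure stays within polynomial space.

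\textbf{Main obstacle.} The delicate point is that both the target weight $k$ and the length of a shortest model can be exponential in the input, so the argument can neither store the word explicitly nor iterate a naive unary step counter. Everything hinges on the loop-elimination observation, which simultaneously bounds the run length by the (exponential but logarithmically indexable) number of automaton states and certifies that such bounded-length witnesses are optimal; the binary encodings of $k$ and of the distances $d(q)$ then keep every counter polynomial.
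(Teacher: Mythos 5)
The paper offers no proof of this proposition to compare against: it is imported as a black-box result of \citet{DodaroFG22}, so your argument stands or falls as a self-contained reconstruction. The decision-problem half of it is correct. The hardness reduction (all-zero weights, $k=0$) works, and your membership argument is the natural one: the subformula-set NFA for \ltlf has $2^{O(\abs{\psi})}$ states with polynomial-size descriptions and polynomial-time checkable transitions; nonnegativity of the weights lets you splice loops out of an optimal accepting run, so some optimal run is simple and has length at most $\abs{Q}$; guessing such a run on the fly while storing only the current state, a binary step counter, and the accumulated weight saturated at $k+1$ gives a nondeterministic polynomial-space procedure, hence PSPACE by Savitch. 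The observation that $k$ and the witness length may be exponential, so both must be handled in binary and the model must be streamed to a write-only tape, is exactly the right concern.

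The genuine gap is in the extraction of a minimal model. Your greedy rule picks \emph{any} letter $\sigma$ and successor $q'$ with $\mathrm{weight}(\sigma)+d(q')=d(q)$, but this only makes $d$ non-increasing along the run, not strictly decreasing: zero-weight letters always exist (the letter assigning $\bot$ to every proposition has weight $0$ no matter what $\omega$ is, and your own hardness reduction zeroes all weights), so the procedure can cycle forever among states of equal $d$-value without ever entering the accepting set. The sentence ``the loop-free bound guarantees termination within $\abs{Q}$ letters'' conflates two different statements: loop-freeness was proved for \emph{some} optimal run, not for \emph{every} greedy descent on $d$. The repair is routine: compute in addition, by the same binary search over your decision procedure, the minimum length $\ell(q)$ of an accepting run of weight $d(q)$ from $q$, and require the chosen step to satisfy both $\mathrm{weight}(\sigma)+d(q')=d(q)$ and $\ell(q')=\ell(q)-1$. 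A short exchange argument shows such a successor exists whenever $\ell(q)>0$ (the first step of a weight-optimal, length-minimal run has exactly these properties), so every step strictly decreases $\ell$, termination within $\abs{Q}$ steps follows, and the rest of your space accounting goes through unchanged. With that tie-breaking rule added, the proof is complete.
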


The above proposition is at the core of the following solution for
$\csep_\exists(C_{\mathit{act}})$.

\begin{theorem}
    \label{thm:acts:existential}
    $\csep_\exists(C_{\mathit{act}})$ is \textbf{PSPACE-complete}.
\end{theorem}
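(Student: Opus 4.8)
The plan is to decide $\csep_\exists(C_{\mathit{act}})$ by reducing the search for a minimum-cost existential counterfactual to a single \emph{weighted} \ltlf satisfiability query, which \citet{DodaroFG22} solve in polynomial space; together with the PSPACE-hardness already established in \cref{thm:hardness}, this yields PSPACE-completeness. First I would exploit the observation from \cref{sec:classes} that in the existential case only \emph{weakening} preconditions is ever useful, so the cost of a counterfactual is exactly the total number of truth assignments added across all action preconditions. I would then build, from the goal-reaching encoding $[P]_G$ of \cref{prop:plantoltl}, a \emph{relaxed} encoding in which the clause forcing $\pre{a}$ to hold whenever action $a$ fires is weakened to $\pre{a}\lor x$ for a single fresh proposition $x$, leaving all effect clauses and the goal-reaching requirement untouched. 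Intuitively $x$ is forced true at exactly those steps where an action fires in a state not satisfying its original precondition, i.e.\ at each precondition \emph{violation}. Conjoining $\psi$ produces a formula $\theta$ whose models are precisely the relaxed plans that reach $G$ and satisfy $\psi$, and setting the weighting function to $\omega(x)=1$ and $\omega(\cdot)=0$ elsewhere makes the weight of a model equal to the number of violations along its plan.

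The heart of the argument, and the step I expect to be the main obstacle, is to show that this model weight coincides with the \emph{edit cost} of the corresponding counterfactual, so that $\theta$ has a model of weight at most $K$ if and only if an existential counterfactual of cost at most $K$ exists. Here I would use that plans are, \emph{w.l.o.g.}, devoid of state loops: along a loop-free plan every step visits a distinct state, so distinct violations require distinct truth assignments to be added to the relevant preconditions, and the number of edits needed to make the plan valid equals the number of violations, hence the weight. For the converse I would take a minimal-weight model, read off the violated (action, state) pairs, add exactly those truth assignments to the corresponding preconditions to obtain a problem $P'$, and verify that the witnessed plan is valid in $P'$, reaches $G$, and satisfies $\psi$, with edit cost equal to the weight. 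The delicate point is to confirm that this weight/cost correspondence is \emph{exact} under the loop-free convention rather than a mere inequality, and that weakening a precondition so as to admit the witness can never invalidate that same witness.

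Finally I would invoke \citet{DodaroFG22}: deciding whether $\theta$ has a model of weight at most $K$ is in PSPACE, and a minimal model can be extracted in polynomial space (from which the counterfactual $P'$ is recovered). Accepting exactly when this query succeeds therefore decides $\csep_\exists(C_{\mathit{act}})$ in polynomial space, and combined with \cref{thm:hardness} this establishes that $\csep_\exists(C_{\mathit{act}})$ is PSPACE-complete.
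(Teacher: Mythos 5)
Your proposal is correct and follows essentially the same route as the paper's proof: both reduce $\csep_\exists(C_{\mathit{act}})$ to a minimum-weight \ltlf satisfiability query solvable in polynomial space by the result of \citet{DodaroFG22}, both rely on the weakening-only observation and on loop-free plans to relate model weight to edit cost, and both obtain hardness from \cref{thm:hardness}. The only difference is cosmetic: the paper keeps the encoding $[P]_G$ as a black box and instead adds a weight-one relaxed copy (precondition $\top$) of every action to the planning problem, whereas you splice a fresh weight-one violation proposition into the precondition clauses of the encoding itself.
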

\begin{proof}
    Let $P=\tuple{F,A,I,G}$ be a planning problem, $\psi$ an \ltlf formula, and
    $K\ge0$. Recall from an observation in \cref{sec:classes} that it only makes
    sense to look for counterfactuals that \emph{weaken} the precondition of
    some actions. We proceed as follows. We define $A_{\mathit{relax}}$ as a set
    of actions equal to $A$ except that each action has its precondition set to
    \emph{true}. Then, we let $P_{\mathit{relax}}=\tuple{F,A\cup
    A_{\mathit{relax}}, I, G}$, and we define a weighting function $\omega$ such
    that $\omega(a)=0$ for all $a\in A$ and $\omega(a)=1$ for all $a\in
    A_{\mathit{relax}}$. Then, we consider the formula
    $\theta\coloneqq[P_{\mathit{relax}}]_G\land\psi$, and look for a model of
    \emph{minimal weight} according to $\omega$. Recall from
    \cref{sec:preliminaries} that this can be done in polynomial space. Then, we
    return \emph{true} if and only if $\theta$ is satisfiable and the minimal
    model has weight at most $K$.
    
    To see that the above procedure is correct, suppose on one hand that
    $\theta$ is indeed satisfiable with a minimal model of weight $L\le K$. We
    can extract each occurrence of an action $a$ from $A_{\mathit{relax}}$ in
    the model, and define as $S_a$ the set of all the states the action $a$ was
    applied to in the model. We then define our counterfactual
    $\explanation{\exists}{P, \psi, C_{\mathit{act}}}=\tuple{F,A',I,G}$ where
    $A'$ is defined as $A$ except that the precondition of each $a\in A$ is
    \emph{disjuncted} with all the states in $S_a$. Now, note that this is an
    existential counterfactual because, by construction, there is a plan of
    $\explanation{\exists}{P, \psi, C_{\mathit{act}}}$ satisfying $\psi$. It
    also has cost $L\le K$, by the way we defined $\omega$. 

    \Viceversa, suppose a counterfactual $\explanation{\exists}{P, \psi,
    C_{\mathit{act}}}=\tuple{F,A',I,G}$ exists with cost $L\le K$. Then, let
    $S_a$ be the set of states added to $\pre{a}$ in $A'$. Let $\pi$ be the plan
    that witnesses the counterfactual, \ie a plan that satisfies $\psi$. We can
    assume \emph{w.l.o.g.} that $\pi$ visits all the states in $S_a$ at least
    once, otherwise there would be a counterfactual of lower cost built by
    \emph{not adding} the non-visited state to $\pre{a}$. Then, we can construct
    a model of $\theta$ of minimum weight by setting proposition
    $a_{\mathit{relax}}$ or $a$ to true respectively each time an action $a$ is
    applied in $\pi$ to a state $s\in S_a$ or $s\not\in S_a$. Since each usage
    of an $a_{\mathit{relax}}$ has cost $1$, the cost of the model is $L\le
    K$.
\end{proof}

The complexity of $\csep_\forall(C_{\mathit{act}})$ concludes this section.
However, this case is more involved. To proceed, let us recall a standard
ingredient of complexity theory. \emph{Oracle machines} are a standard tool~(see
\eg~\citet{AroraB09}) for the study of the relationship between complexity
classes. An Oracle machine is a Turing machine with an associated \emph{oracle
language} $O$. At any time, the machine can query whether $s\in O$ for some
string $s$ and get the answer in \emph{constant time}. When we study the
computational complexity of a problem, by counting the oracle computation as a
single computation step we isolate the main algorithm solving the problem from
its sub-problems, clarifying the relationship between them. 

What we can show here is that $\csep_\forall(C_{\mathit{act}})$ belongs to the
class NEXP\textsuperscript{NP}, which is the class of problems solvable in
\emph{nondeterministic exponential time} when given access to an oracle for an
NP problem. This class is also often denoted as $\Sigma^{EXP}_2$, referring to
its second place in the \emph{exponential hierarchy}. Note that
$\text{NEXP}\subseteq\text{NEXP\textsuperscript{NP}}\subseteq\text{EXPSPACE}$
but it is not known whether $\text{NEXP}=\text{NEXP\textsuperscript{NP}}$ unless
$\text{P}=\text{NP}$.

In our case, the NP problem to be used as oracle is SAT, \ie \emph{propositional
satisfiability}, the prototypical NP-complete problem. We use a call to a SAT
oracle to test \emph{bounded \ltlf satisfiability}, \ie satisfiability of an
\ltlf formula restricted to models of bounded length. Many approaches exist in
literature to solve LTL and \ltlf satisfiability (or equivalently model
checking) through an iteration of SAT queries~\cite{ClarkeBRZ01,GeattiGMV24}. In
particular, we can state the following.
\begin{proposition}[\citet{GeattiGMV24}]
    \label{prop:ltlsat}
    Given an \ltlf formula $\psi$ and a bound $B>0$, in time polynomial in $B$
    and in the size of $\psi$ one can produce a \emph{propositional} formula
    $\enc(\psi, B)$ such that $\enc(\psi, B)$ is \emph{satisfiable} if and only
    if $\psi$ has a model of length at most $B$.
\end{proposition}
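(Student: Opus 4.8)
The plan is to build a standard bounded-model-checking style encoding, adapted to the finite-trace semantics of \ltlf. Fix the bound $B$. A candidate model is a finite word $w = \tuple{s_0, \ldots, s_{k-1}}$ with $1 \le k \le B$ over the alphabet $2^F$. I would first introduce, for every $i \in \{0, \ldots, B-1\}$ and every proposition $p \in F$, a Boolean variable $p_i$ encoding whether $p$ holds at position $i$, together with variables $\mathit{last}_0, \ldots, \mathit{last}_{B-1}$ marking which position is the last one of the trace. A few well-formedness clauses enforce that exactly one $\mathit{last}_i$ is true, so that the guessed trace has a well-defined length $k \le B$; the variables attached to positions beyond the last one are irrelevant and may be left unconstrained.

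The core of the construction is a recursive encoding of subformula truth. For each subformula $\varphi$ of $\psi$ and each position $i$ I would introduce a fresh variable $v_{\varphi,i}$ intended to be true exactly when $\tuple{s_i, \ldots, s_{k-1}} \models \varphi$, and add defining constraints following the \ltlf semantics. The propositional cases are immediate ($v_{p,i} \leftrightarrow p_i$, and the Boolean connectives pushed through componentwise). The temporal cases require care at the end of the trace: $v_{\tomorrow\varphi, i}$ holds iff $i$ is \emph{not} the last position and $v_{\varphi,i+1}$ holds, whereas $v_{\wtomorrow\varphi, i}$ holds iff $i$ \emph{is} the last position or $v_{\varphi,i+1}$ holds; the until operator is encoded by the usual fixpoint unrolling $v_{\varphi_1 \until \varphi_2, i} \leftrightarrow v_{\varphi_2, i} \lor (v_{\varphi_1, i} \land \lnot\mathit{last}_i \land v_{\varphi_1 \until \varphi_2, i+1})$, which terminates correctly because at the last position the until collapses to its right operand; and $\Diamond$, $\Box$ are handled either by their derived definitions or by analogous unrollings. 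The top-level formula is $\enc(\psi, B) \coloneqq v_{\psi, 0} \land (\text{well-formedness clauses})$.

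For the size and time bound, I would observe that there are $O(B \cdot \abs{\psi})$ pairs $(\varphi, i)$, each contributing a defining constraint of constant size, so $\enc(\psi, B)$ has size polynomial in $B$ and $\abs{\psi}$ and can clearly be produced in polynomial time. Correctness then follows from a structural induction: given a satisfying assignment, reading off the $p_i$ up to the chosen last position yields a word of length $\le B$, and the induction shows $v_{\varphi,i}$ is true precisely when the suffix from position $i$ satisfies $\varphi$; conversely, any model of length $\le B$ induces an assignment satisfying all the defining constraints.

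The main obstacle I expect is getting the finite-trace boundary conditions exactly right --- in particular the interaction between the $\mathit{last}_i$ markers, the weak and strong tomorrow operators, and the base case of the until unrolling at the final position. Unlike infinite-trace LTL, there is no loop to close, so no lasso/loopback machinery is needed; but one must verify that the inductive invariant for $v_{\varphi,i}$ still holds at $i = k-1$, which is exactly where the strong-versus-weak distinction and the collapse of $\until$ to its right-hand side come into play.
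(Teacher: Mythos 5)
This proposition is not proved in the paper at all: it is imported as a known result, with a citation to the work of Geatti et al., so there is no internal proof to compare yours against. Your construction is a correct reconstruction of the standard bounded-model-checking--style encoding that underlies the cited result --- position-indexed proposition variables, a $\mathit{last}_i$ marker, definitional variables $v_{\varphi,i}$ for subformulae, and the until unrolled as a fixpoint that collapses to its right operand at the final position --- and your observation that the finite-trace setting needs no lasso machinery, only careful boundary handling for $\tomorrow$, $\wtomorrow$ and $\until$ at the last position, is exactly the crux; the sole remaining detail (the constraints at $i=B-1$ must not reference nonexistent index-$B$ variables, which the $\lnot\mathit{last}_i$ guard makes harmless) is minor and consistent with what you already flag.
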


With \cref{prop:ltlsat} in place we can proceed.

\begin{theorem}
    \label{thm:acts:universal:membership}
    $\csep_\forall(C_{\mathit{act}})$ belongs to
    \textbf{NEXP\textsuperscript{NP}}.
\end{theorem}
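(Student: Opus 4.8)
The plan is to prove membership by a \emph{guess-and-check} procedure that fits the NEXP\textsuperscript{NP} template: a nondeterministic exponential-time machine guesses a candidate counterfactual and then certifies it with a constant number of calls to a SAT oracle. First I would observe that, by \cref{def:decision:problem}, it suffices to decide whether \emph{some} reachable problem $P'$ that differs from $P$ only in its action preconditions is a universal counterfactual of cost at most $K$; since $\explanation{\forall}{P,\psi,C_{\mathit{act}}}$ is the cost-minimal such problem, it exists with $\pathcost{\explanation{\forall}{P,\psi,C_{\mathit{act}}}}\le K$ iff such a $P'$ exists. Every problem obtained from $P$ by repeatedly adding or removing single truth assignments from preconditions is reachable in $\mathcal{C}(P,C_{\mathit{act}})$, and the minimum number of edits needed to turn $P=\tuple{F,A,I,G}$ into $P'=\tuple{F,A',I,G}$ is exactly $\sum_{a}\#\bigl[(\pre{a}\land\neg\pre{a'})\lor(\pre{a'}\land\neg\pre{a})\bigr]$, the total number of differing assignments. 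Hence the machine nondeterministically guesses, for each $a\in A$, the new precondition $\pre{a'}$ as a truth table over $F$ (an exponential-size certificate, which is within the budget of nondeterministic exponential time), computes the cost above, and rejects immediately if it exceeds $K$.

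It then remains to verify that the guessed $P'$ is genuinely a universal counterfactual, \ie that $\plans{P'}\neq\emptyset$ and every plan of $P'$ satisfies $\psi$. Using \cref{prop:plantoltl}, the solution plans of $P'$ are exactly the models of $[P']_G$, so the two conditions become: (i) $[P']_G$ is satisfiable, and (ii) $[P']_G\land\neg\psi$ is \emph{un}satisfiable (no solution plan violates $\psi$). Here is where the SAT oracle enters: I would fix a bound $B$ (discussed below), build the propositional encodings $\enc([P']_G,B)$ and $\enc([P']_G\land\neg\psi,B)$ of \cref{prop:ltlsat}, and issue two oracle queries, accepting iff the first is satisfiable, the second is unsatisfiable, and the cost check passed. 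Writing these encodings and the (exponentially long) query strings costs exponential time, but the oracle answers each SAT query in a single step regardless of its length, so the whole procedure runs in nondeterministic exponential time with an NP oracle, placing $\csep_\forall(C_{\mathit{act}})$ in NEXP\textsuperscript{NP}.

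The main obstacle, and the step requiring care, is the choice of the length bound $B$. The encoding of \cref{prop:ltlsat} has size polynomial in $B$ and in the formula, and $[P']_G$ is already exponential in the input because its preconditions are full truth tables; so the naive \ltlf small-model bound $B=2^{|[P']_G|}$ would be \emph{doubly} exponential, blowing the encoding (and the query) out of the NEXP\textsuperscript{NP} budget. I would instead justify a \emph{single}-exponential bound by a product argument: a solution plan of $P'$ is a path in the state graph of $P'$, which has at most $2^{\abs{F}}$ states, and deciding whether such a plan additionally satisfies (resp.\ violates) $\psi$ amounts to finding an accepting run in the product of this graph with the automaton for $\psi$ (resp.\ $\neg\psi$), whose size is $2^{O(\abs{\psi})}$. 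Any accepting run that repeats a pair \emph{(planning state, automaton state)} can be short-circuited without affecting acceptance, so if a (violating) plan exists at all then one of length at most $B=2^{\abs{F}}\cdot 2^{O(\abs{\psi})}$ exists. This $B$ is single-exponential in the original input, making both encodings single-exponential in size and each oracle query a legitimate NP instance, which is exactly what keeps the construction inside NEXP\textsuperscript{NP}.
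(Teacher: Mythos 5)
Your proposal is correct and its skeleton matches the paper's proof: nondeterministically build a candidate $P'$ (the paper's \cref{algo:act:universal} guesses the $K$ individual edits, each adding or removing one truth assignment from one action's precondition, whereas you guess the new precondition truth tables wholesale and verify the edit cost afterwards --- an immaterial difference), then certify universality with exactly the same two SAT-oracle queries via \cref{prop:ltlsat}: accept iff the bounded encoding of $[P']_G$ is satisfiable and the bounded encoding of $[P']_G\land\neg\psi$ is not. Where you genuinely diverge is the length bound, and your version is the more careful one. The paper takes $B=2^{\abs{F}}$ for both queries, justified by the fact that solution plans need never revisit a state; that argument is airtight for the first query (cutting state loops preserves goal reachability), but delicate for the second, since cutting a loop from a goal-reaching plan that violates $\psi$ need not preserve the violation, and conversely a short model of $[P']_G\land\neg\psi$ may revisit states, so relating ``no short violating model'' to ``no violating plan'' needs precisely the argument you supply. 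Your product bound $B=2^{\abs{F}}\cdot 2^{O(\abs{\psi})}$ --- short-circuiting repeated pairs of planning state and automaton state for $\neg\psi$ --- gives a small-model property for the conjunction $[P']_G\land\neg\psi$ itself, making both oracle checks sound and complete without any assumption on plans being loop-free. You also correctly flag and avoid a trap the paper does not discuss: since the guessed preconditions are truth tables, $[P']_G$ is already exponential in the input, so the naive \ltlf small-model bound (exponential in the formula size) would be doubly exponential and break the NEXP\textsuperscript{NP} budget, while both your bound and the paper's keep the encodings, and hence the oracle queries, single-exponential.
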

\begin{proof}
    The solution is shown in \cref{algo:act:universal}. Following the definition
    of universal counterfactual, we non-deterministically guess a number of
    actions to be edited and the states to be added or removed and we build a
    candidate counterfactual $P'$. Then we need to check first if the problem
    has any solution plan at all, and then if all its solution plans satisfy
    $\psi$. For both checks we invoke the NP oracle, whose calls are underlined
    in \cref{algo:act:universal}:
    \begin{enumerate}
        \item the first check tests the satisfiability of
        $\enc([P']_G,2^{\abs{F}})$, which by \cref{prop:ltlsat} corresponds to
        testing the satisfiability of $[P']_G$ over models of length at most
        $2^{\abs{F}}$. We know this bound is sufficient because solution plans
        of classical planning problems do not have any reason to visit the same
        state twice, and after $2^{\abs{F}}$ steps we are sure to find a
        repetition.
        \item the second check tests the satisfiability of the formula
        $\enc([P']_G\land\neg\psi, 2^{\abs{F}})$ which by \cref{prop:ltlsat}
        corresponds to the satisfiability of $[P']_G\land\neg\psi$ over models
        of length at most $2^{\abs{F}}$, a bound that is chosen for the same
        reason outlined above. Since we accept when $[P']_G\land\neg\psi$ is
        \emph{not} satisfiable this corresponds to checking the validity of
        $[P']_G\to\psi$, which corresponds to the fact that all solution plans
        of $P'$ satisfy $\psi$.\qedhere
    \end{enumerate}
\end{proof}

\Cref{thm:acts:universal:membership} shows that
$\csep_\forall(C_{\mathit{act}})$ has a $\text{NEXP\textsuperscript{NP}}$ upper
bound, but a matching lower bound is still missing and not trivial to prove.
\Cref{thm:hardness} shows that the class of $\csep_\exists(C)$ and
$\csep_\forall(C)$ problems have a PSPACE \emph{core} that cannot be avoided,
made of the interconnection of two PSPACE-complete problems, namely classical
planning and \ltlf satisfiability. However, the possible-change relation $C$ can
be arbitrarily complex. For the practically relevant classes of counterfactuals
that we chose to study, the complexity has turned out to be relatively low, but
theoretically one may devise classes of possible-change relations that easily
cause complexity to increase arbitrarily, as evidence of the formal flexibility
of our framework.

\begin{algorithm}[t]
    \caption{Algorithm for $\csep_\forall(C_{\mathit{act}})$}
    \label{algo:act:universal}
    \begin{algorithmic}[1]
        \Require $P=\tuple{F,A,I,G}$, $\psi$ \ltlf, $K\ge0$
        \For{$i\gets 1$ to $K$}
            \State \textbf{guess} action $a$ \textbf{and} state $s$
            \State \textbf{guess} $\mathit{add}\in\{\mathit{true},\mathit{false}\}$
            \If{\emph{add}}
                \State $\pre{a}'\gets \pre{a}\lor s$
            \Else
                \State $\pre{a}'\gets \pre{a}\land\neg s$
            \EndIf
        \EndFor
        \If{\begin{tabular}[t]{@{\hspace{1em}}l}
                \underline{$\enc([P']_G,2^{\abs{F}})$ is satisfiable} and\\
                \underline{%
                    $\enc([P']_G\land\neg\psi, 2^{\abs{F}})$ is \emph{not}
                    satisfiable%
                }
            \end{tabular}\\\hspace*{-0.5ex}}
            \State \textbf{accept}
        \Else
            \State \textbf{reject}
        \EndIf
    \end{algorithmic}
\end{algorithm}

\section{\change{Discussion}}
\label{sec:discussion}

This paper introduced counterfactual scenarios, a novel explanation paradigm for
automated planning problems. Differently from existing proposals, our
explanations identify minimal modifications to a planning problem such that it
admits plans that comply with user-specified requirements, here captured by
means of \ltlf formulae. Given that our proposal is the first of its kind,
in this paper we set to investigate the computational complexity of generating
this type of explanations and explored potential theoretical barriers to our
proposal. 
Our analysis revealed that 
%
computing counterfactual scenarios is, in most cases, only as expensive as solving planning problems. This somewhat surprising result suggests that, for a significant class of problems, the additional step of identifying changes to a planning problem necessary to achieve a desired outcome does not introduce a prohibitive increase in complexity. 

\change{Our counterfactual scenarios are strongly related to the literature on model repair. For instance, work by Lin and Bercher addresses a setting where a flawed domain is repaired to fit a plan that serves as a witness~\cite{LinB21,LinB23}, often accounting for a notion of minimality of change~\cite{LinGSB25}. Although this line of work bears some similarities to our proposal, here we considered the more general problem of generating modifications that comply with an arbitrary \ltlf formula, rather than fixing existing domains based on specific plan examples. This is a major departure point, as our counterfactual scenarios can accommodate complex behaviours captured by abstract and highly expressive \ltlf formulae, and are guaranteed to return modified domains where all (or at least one) plans adhere to such behaviours. This profound technical difference is also reflected in the divergent computational complexities we obtained: most of our problems are PSPACE-complete, while the setting considered by Lin and Bercher typically yields NP-complete problems. Another critical difference between our work and model repair is the inherent plausibility requirement. Our counterfactuals are designed to generate actionable and plausible insights, pointing to what would need to be changed in a planning domain for a given behaviour to be observed. As such, changes suggested by our counterfactuals are not necessarily aimed at fixing a flawed domain; rather, they must be plausible (according to a domain-dependent notion of plausibility). This crucial requirement motivated our initial choice of using \ltlf formulae to capture plausibility and marks another fundamental difference from work on model repair.}

\change{Our work fits within the broader scope of explainable automated planning and introduces a novel explanation paradigm that takes a fundamentally different approach than both model reconciliation and contrastive explanations. Unlike model reconciliation, our counterfactual scenarios do not operate on the premise of reconciling a human's mental model with an agent's. Our approach focuses on identifying systematic modifications to a planning problem that would enable or enforce specific properties on the resulting plans, as captured by an \ltlf formula. As such, our counterfactuals are prescriptive and prospective, answering the question ``How can I change the planning problem so that plans satisfying a given property become possible?". This also marks a crucial difference from contrastive explanations, which have typically been defined to address the question: ``Why did the agent choose action A instead of action B in a plan?". Contrastive explanations are inherently local and retrospective, providing a justification for a specific choice made for a given planning problem. Our explanations have a global scope, identifying modifications that affect the entire planning problem space to enable or enforce a class of solutions characterized by an \ltlf formula. This is particularly evident in the universal setting, where there is a sharp contrast with the local and specific comparison inherent in contrastive explanations.
 }




\section{\change{Future work}}
\label{sec:future}

\change{The findings presented here have profound implications for the practical application of our counterfactuals, suggesting that existing planning algorithms could be leveraged to support the generation of counterfactual scenarios.} 
Our future work will focus on the development of \change{efficient algorithms} for the generation of counterfactual scenarios.
\change{Our} efforts will
initially concentrate on the classical planning setting and the use of \ltlf
specifications as described in this paper. \change{Given the strong link with model repair, we believe it would be interesting to investigate whether some of the algorithmic techniques developed in that space could be adapted to compute counterfactual scenarios, particularly if plausibility constraints could be integrated.} Once a robust algorithmic
foundation is established \change{for this setting}, it would be interesting to explore the generation of
counterfactual scenarios for different planning formalisms, such as those
addressing \change{numeric} domains, as well as considering alternative
specification languages over finite traces, such as CTL for probabilistic
systems~\cite{0020348}.


\section*{Acknowledgments}
Andrea Micheli has been supported by the STEP-RL project funded by the European Research Council under GA n. 101115870. Francesco Leofante was supported by Imperial College under the Imperial College Research Fellowship scheme. Nicola Gigante was partially funded by the NextGenerationEU FAIR PE0000013 project MAIPM (CUP C63C22000770006).

\bibliographystyle{kr}
\bibliography{refs}





\end{document}